\documentclass[11pt]{article}
\usepackage[margin=1in]{geometry}
\usepackage[utf8]{inputenc}
\usepackage[T1]{fontenc}

\usepackage{fancyhdr}

\fancypagestyle{plain}{
  \fancyhf{}
  \cfoot{\small\thepage}

}

\usepackage{amsmath, amssymb, amsfonts, amsthm}
\usepackage{mathtools}

\newtheorem{theorem}{Theorem}

\newtheorem{definition}{Definition}
\newtheorem{assumption}{Assumption}

\newtheorem{lemma}{Lemma}

\usepackage{graphicx}
\graphicspath{{./images/}}
\usepackage{booktabs}
\usepackage{array}
\usepackage{multirow}
\usepackage{subfigure}

\usepackage{algorithm}
\usepackage{algpseudocode}

\usepackage{microtype}
\usepackage{nicefrac}
\usepackage{textcomp}
\usepackage{xcolor}
\usepackage{balance}

\usepackage{cite}
\usepackage{url}
\usepackage{hyperref}
\usepackage{cleveref}
\crefname{theorem}{Theorem}{Theorems}
\Crefname{theorem}{Theorem}{Theorems}

\crefname{corollary}{Corollary}{Corollaries}
\Crefname{corollary}{Corollary}{Corollaries}

\crefname{definition}{Definition}{Definitions}
\Crefname{definition}{Definition}{Definitions}

\crefname{assumption}{Assumption}{Assumptions}
\Crefname{assumption}{Assumption}{Assumptions}

\crefname{proposition}{Proposition}{Propositions}
\Crefname{proposition}{Proposition}{Propositions}

\crefname{lemma}{Lemma}{Lemmas}
\Crefname{lemma}{Lemma}{Lemmas}
\hypersetup{
    colorlinks=true,
    linkcolor=red,   
    citecolor=blue,    
    urlcolor=magenta  
}

\providecommand{\keywords}[1]{%
  \vspace{0.5em}
  \noindent\textbf{Keywords---}#1
}

\newcommand{\R}{\mathbb{R}}

\def\BibTeX{{\rm B\kern-.05em{\sc i\kern-.025em b}\kern-.08em
T\kern-.1667em\lower.7ex\hbox{E}\kern-.125emX}}

\begin{document}

\title{\textbf{Achieving $\widetilde{\mathcal{O}}(1/\epsilon)$ Sample Complexity for Bilinear Systems Identification under Bounded Noises}}

\author{
Hongyu Yi\thanks{Department of Electrical and Computer Engineering, University of Washington, Seattle, WA, USA. } , Chenbei Lu\thanks{Cornell University AI for Science Institute, Cornell University, Ithaca, NY, USA.} , Jing Yu\footnotemark[1]
}
\date{\normalsize \today}

\maketitle

\begin{abstract}
This paper studies finite-sample set-membership identification for discrete-time bilinear systems under bounded symmetric log-concave disturbances. Our analysis considers trajectory-dependent regressors and allows marginally stable dynamics with polynomial mean-square state growth. We prove that the diameter of the feasible parameter set shrinks with sample complexity $\widetilde{\mathcal O}(1/\epsilon)$ where $\epsilon$ is the estimation error. Simulation supports the theory and illustrates the advantage of the proposed estimator for uncertainty quantification.
\end{abstract}

\keywords{System identification, Bilinear system, Statistical learning}

\section{Introduction}
Learning dynamical systems from data, i.e., system identification, is a central problem in control, with broad impact across robotics, aerospace, and energy systems. In recent years, quantifying the sample efficiency of system identification algorithms has gained increasing interest.

In particular, a large body of recent work studies the non-asymptotic sample complexity of identifying linear and switched linear dynamical systems \cite{sarkar2019Nearoptimalfinite,Oymak2019Non-asymptoticIdentification,Zeng2025SystemIdentificationUnderBoundedNoise,Racz2025LSS}. Most work builds on least-squares-type estimators under standard stochastic noise models (e.g., Gaussian noise), establishing the canonical sample complexity of $\widetilde{\mathcal{O}}(1/\epsilon^{2})$\footnote{$f(\epsilon)=\widetilde{\mathcal{O}}(g(\epsilon))$ means
$f(\epsilon)=\mathcal{O}(g(\epsilon)\log^c(1/\epsilon))$ for some $c>0$.} where it takes at least $\widetilde{\mathcal{O}}(1/\epsilon^2)$ number of data points to achieve an estimation error of $\epsilon$. These results enable learning-based control with provable guarantees \cite{Simchowitz2018LearningWithoutMixing,simchowitz2020NaiveExploration}. 
More recently, it has been shown that when the noises are bounded, tail-driven least-squares analyses can be overly conservative. In particular, \cite{Zeng2025SystemIdentificationUnderBoundedNoise} shows that with \emph{bounded} noise, the minimax-optimal sample complexity for linear system identification improves to $T=\widetilde{\mathcal{O}}(1/\epsilon)$, revealing a sharp gap between bounded and unbounded noise regimes and confirming that the canonical $\widetilde{\mathcal{O}}(1/\epsilon^{2})$ scaling is suboptimal in bounded-noise settings.

Set-membership estimation (SME) \cite{bai1998convergenceproperties-sme, akccay2004size, milanese2004set} has recently gained renewed attention for system identification \cite{xu2025sample,gao2024closure, lauricella2020set} and learning-based control \cite{niknejad2025online, yu2023online,parsi2023dual} under bounded noise: by directly constructing uncertainty sets consistent with the observed data, SME can explicitly exploit bounded-noise structure rather than relying on stochastic tail assumptions. 
More recently, for linear dynamical systems, non-asymptotic analyses show that the sample complexity of the SME can achieve the optimal $\widetilde{\mathcal{O}}(1/\epsilon)$ rate under bounded noise \cite{Li2024SME-LTI}. 
These developments underscore the growing relevance of set-based identification in safety-critical settings.

Beyond linear dynamics, bilinear systems provide a broad and practically relevant model class that captures rich multiplicative coupling between the state and the input, making them more expressive for many control applications including energy systems \cite{Sattar2022SysId,KELMAN2011BilinearHVAC}. At the same time, this state-action coupling propagates through time and induces strong temporal dependence between regressors and noise, which substantially complicates identification and excitation in the finite-sample regime. 
Existing results primarily focus on least-squares-type \emph{point} estimation for bilinear systems \cite{Sattar2022SysId,sattar2025finitesampleidentificationpartially}, leaving finite-sample \emph{uncertainty set-based} guarantees under bounded noise largely open. A closely related result is \cite{yingying2024NonlinearDynamicalSystems}, where $\widetilde{\mathcal{O}}(1/\epsilon)$ sample complexity was proved for general nonlinear analytic systems without explicit dimension dependence under the assumption that the system has deterministic local input-to-state stability (LISS). In contrast, our analysis provides explicit dimension dependence and allows marginal mean-square stability with polynomially growing~regressors. 

In this paper, we develop a finite-sample set-membership identification framework for discrete-time bilinear systems with bounded symmetric log-concave noise. We consider single-trajectory settings where the bilinear regressors may grow polynomially over time. Our arguments involve a weaker log-concave noise assumption that precludes direct use of standard Gaussian tail arguments. The main technical contribution of this paper is a finite-sample analysis that controls the growth of the state-input features and establishes the persistent excitation conditions under these weaker stability and noise assumptions, leading to an explicit contraction guarantee for the SME feasible parameter set.

\section{Problem Setup}
\label{sec:problem-def}
In this section, we present the bilinear system model and introduce the set-membership identification algorithm, along with assumptions.
\subsection{Bilinear Dynamical System Model}
We consider the discrete-time bilinear system
\begin{align}
\mathbf{x}_{t+1}
= \mathbf{A}\mathbf{x}_{t}
+ \sum\nolimits_{i=1}^{m}\mathbf{u}_{t}[i]\mathbf{B}_{i}\mathbf{x}_{t}
+ \mathbf{w}_{t},
\label{eq:system}
\end{align}
where $\mathbf{x}_{t}\in\mathbb{R}^{n}$ is the state and $\mathbf{u}_{t}\in\mathbb{R}^{m}$ is the input, with $\mathbf{u}_t[i]$ denoting the $i$th coordinate of $\mathbf{u}_{t}$. The initial state $\mathbf{x}_0$ satisfies $\mathbb{E}[\|\mathbf{x}_0\|_2^2] < \infty$, and is independent of $\mathbf{u}_t \text{ and } \mathbf{w}_t$ for $t \not = 0$. The matrices
$\mathbf{A}\in\mathbb{R}^{n\times n}$ and $\{\mathbf{B}_{i}\in\mathbb{R}^{n\times n}\}_{i=1}^{m}$ are unknown and must be identified.
The \emph{i.i.d.} noise $\mathbf{w}_t$ takes values in a known compact set $\mathbb{W}\subset\mathbb{R}^n$, i.e., $\mathbf{w}_t\in\mathbb{W}$ for all $t$. Define the block matrix $
\mathbf{B}:=[\mathbf{B}_{1},\ldots,\mathbf{B}_{m}]\in\mathbb{R}^{n\times (nm)},
\ 
\mathbf{\Theta}_\star:=\big[\,\mathbf{A}\ \ \mathbf{B}\,\big]\in\mathbb{R}^{n\times (n+nm)}.$
Consider the regressor:
\begin{align}
\mathbf{z}_{t}
:=
\begin{bmatrix}
\mathbf{x}_{t}^\top \quad
(\mathbf{u}_{t}\otimes \mathbf{x}_{t})^\top
\end{bmatrix}^\top
\in \mathbb{R}^{n+nm}.
\label{eq:regressor}
\end{align}
Then \eqref{eq:system} can be written compactly as
\begin{align}
\mathbf{x}_{t+1}=\mathbf{\Theta}_\star\mathbf{z}_{t}+\mathbf{w}_{t}\, .
\label{eq:lin-param}
\end{align}
We assume access to a length-$T$ trajectory $\mathcal{D}_{T}
:=\{(\mathbf{z}_{t},\mathbf{x}_{t+1})\}_{t=0}^{T-1}$ generated by \eqref{eq:lin-param}. The identification goal is to learn the unknown parameter matrix $\mathbf{\Theta}_\star$ consistent with the data and the noise set $\mathbb{W}$.

\subsection{Set-membership Identification}
SME constructs the \emph{feasible set} of parameters that are consistent with the observed trajectory and the known noise set $\mathbb{W}$. 
Given the dataset $\mathcal{D}_T=\{(\mathbf{z}_t,\mathbf{x}_{t+1})\}_{t=0}^{T-1}$ generated by \eqref{eq:lin-param} and the bound $\mathbf{w}_t\in\mathbb{W}$, define
\begin{align} \mathbb{S}_{T} :=\bigcap\nolimits_{t=0}^{T-1} \left\{ \mathbf{\Theta}\in\mathbb{R}^{n\times(n+nm)}: \mathbf{x}_{t+1}-\mathbf{\Theta}\mathbf{z}_{t}\in\mathbb{W} \right\}. \label{eq:SME-set} \end{align}
Equivalently, $\mathbf{\Theta}\in\mathbb{S}_{T}$ if and only if there exists a noise sequence $\{\tilde{\mathbf{w}}_t\}_{t=0}^{T-1}$ with $\tilde{\mathbf{w}}_t\in\mathbb{W}$ such that
$\mathbf{x}_{t+1}=\mathbf{\Theta}\mathbf{z}_t+\tilde{\mathbf{w}}_t$ holds for all $t=0,\ldots,T-1$.
By construction, $\mathbb{S}_{T}$ is nested: $\mathbb{S}_{T+1}\subseteq \mathbb{S}_{T}$ and $\mathbf{\Theta}_\star \in \mathbb{S}_T$ for all $T \geq 0$.  We quantify the size of the feasible set produced by SME with the diameter of $\mathbb{S}_T$.
\begin{definition}[Diameter of a set]
For a bounded set $\mathbb{S}$, its diameter is $\mathrm{diam}(\mathbb{S})
:=
\sup\nolimits_{\mathbf{X},\mathbf{Y}\in\mathbb{S}}
\|\mathbf{X}-\mathbf{Y}\|_{F}$, where $\|\cdot\|_{F}$ denotes the Frobenius norm. If $\mathbb{S}$ is unbounded, then $\mathrm{diam}(\mathbb{S}) = \infty$.
\end{definition}
Our goal is to quantify how quickly the diameter of $\mathbb{S}_T$ decreases as $T$ grows.

\subsection{Assumptions and Definitions}

We impose standard conditions on the input process and the disturbance to facilitate a finite-sample analysis of \eqref{eq:SME-set}.

\begin{assumption}[Input process]\label{ass:input}
The input sequence $\{\mathbf{u}_{t}\}_{t\ge 0}$ is \emph{i.i.d.}, satisfies $\mathbb{E}[\mathbf{u}_{t}]=\mathbf{0}$ and $\|\mathbf{u}_{t}\|_{\infty}\le u_{\max}$ almost surely, and has covariance
$\mathbb{E}[\mathbf{u}_{t}\mathbf{u}_{t}^{\top}]=\sigma_{u}^{2}\mathbf{I}_{m}$ for some $\sigma_{u}>0$. 
Moreover, $\mathbf{u}_{t}$ is independent of $\mathbf{x}_{0}$ and of $\{\mathbf{w}_{t}\}_{t\ge 0}$.
\end{assumption}

\begin{assumption}[Bounded noise model]\label{ass:noise}
The disturbance sequence $\{\mathbf{w}_{t}\}_{t\ge 0}$ is \emph{i.i.d.}, satisfies $\mathbb{E}[\mathbf{w}_{t}]=\mathbf{0},\ \mathrm{Cov}(\mathbf{w}_{t}) \succeq \sigma_w^2 \mathbf{I}$ for a $\sigma_w >0$, and is supported on
$\mathbb{W}=\{\mathbf{w}:\|\mathbf{w}\|_{\infty}\le w_{\max}\}$.
We further assume that for any $\|\mathbf{v}\|_{2}=1$, the marginal $\mathbf{v}^{\top}\mathbf{w}_{t}$ has a centered, symmetric, log-concave density.
\end{assumption}

\begin{assumption}[Boundary mass of noise]\label{ass:tight-bound-w}
Let $\mathbb{W}=\{\mathbf{w}:\|\mathbf{w}\|_{\infty}\le w_{\max}\}$. There exist constants $c_w>0$ and $\varepsilon_0\in(0,w_{\max}]$ such that
for any $j\in[n]$, $b\in\{\pm 1\}$, and $\varepsilon\in(0,\varepsilon_0]$, $\mathbb{P}\!\left(b\,\mathbf{w}_t[j]\ge w_{\max}-\varepsilon\right)\ge c_w\,\varepsilon$ holds.

\end{assumption}

\medskip
We also introduce the following two widely used excitation notions, persistent excitation (PE) and the block martingale small-ball (BMSB) condition, which will be used to control the regressors $\{\mathbf{z}_t\}$ and derive contraction bounds.

\begin{definition}[Persistent excitation]\label{def:PE}
Let $\{\mathbf{z}_t\}_{t\ge 0}$ be the regressor sequence defined in \eqref{eq:regressor}. 
We say that $\{\mathbf{z}_t\}$ is \emph{persistently exciting} with uniform parameters $(\alpha,\kappa)$, where $\alpha>0$ and $\kappa\in\mathbb{N}_+$, if for every $t_0\ge 0$,
\begin{align*}
1/\kappa\sum\nolimits_{t=t_0}^{t_0+\kappa-1} \mathbb{E}\!\left[\mathbf{z}_t \mathbf{z}_t^\top \right]
\succeq \alpha^2 \mathbf{I}_{n+nm}.
\end{align*}
\end{definition}
\vspace{0.6\baselineskip}

\begin{definition}[BMSB condition]\label{def:BMSB}
Consider a filtration $\{\mathcal{F}_{t}\}_{t\ge 1}$ and an $\{\mathcal{F}_{t}\}$-adapted process $\{Z_{t}\}_{t\ge 1}$ in $\mathbb{R}^{d}$.
We say $\{Z_{t}\}$ satisfies the $(k,\Gamma_{\mathrm{sb}},p)$-block martingale small-ball condition if there exist uniform $k\in\mathbb{N}_{+}$, $p\in(0,1]$, and $\Gamma_{\mathrm{sb}}\succ 0$, such that for any unit vector $\lambda\in\mathbb{R}^{d}$,
\begin{align*}
1/k\sum\nolimits_{i=1}^{k}
\mathbb{P}\!\left(\big|\lambda^{\top}Z_{t+i}\big|\ge \sqrt{\lambda^{\top}\Gamma_{\mathrm{sb}}\lambda}\ \middle|\ \mathcal{F}_{t}\right)
\ge p,
\quad \forall t\ge 1\,.
\end{align*}
\end{definition}

\section{Finite Sample Analysis}

Our first contribution is defining a quantitative notion of marginal stability of bilinear systems. This follows the classic work~\cite{Kubrusly1985MSS} on marginal stability and enters the sample-complexity bound in \Cref{thm:sample_complexity}.

\begin{lemma}[Polynomial mean-square growth]\label{lem:PMS}
Consider \Cref{ass:input,ass:noise}, and let the initial state satisfy $\mathbb{E}[\|\mathbf{x}_0\|_2^2] < \infty$ and be independent of $\mathbf{u}_t, \mathbf{w}_t$ for all $t\not = 0$. Define the augmented matrix $\tilde{\mathbf A}$ that takes the form
\begin{align*}
\tilde{\mathbf A}
  := \mathbf F \otimes \mathbf F
   + \sum\nolimits_{k=1}^m \sum\nolimits_{\ell=1}^m
     \gamma_{k\ell}\, \mathbf B_\ell \otimes \mathbf B_k ,
\end{align*}
with $\mathbf F := \mathbf A + \sum_{k=1}^m  \mathbb{E}[\mathbf{u}_t[k]]\, \mathbf B_k$ and
$\gamma_{k\ell} := \mathbb{E}[\mathbf{u}_t[k]\mathbf{u}_t[\ell]]- \mathbb{E}[\mathbf{u}_t[k]]\,\mathbb{E}[\mathbf{u}_t[\ell]]$. If the augmented matrix $\tilde{\mathbf A}$ satisfies $\rho(\tilde{\mathbf A})\le 1$, the bilinear system exhibits \emph{polynomial mean-square growth}: there exist constants $c^{\mathrm{PMS}}<\infty$ and an integer $r\in \{0,1,..,n^2\}$ such that $\mathbb{E}\!\left[\|\mathbf{x}_t\|_2^2\right]\le c^{\mathrm{PMS}}(1+t^r),\ \forall t\ge 0$.

\end{lemma}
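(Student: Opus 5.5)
Let $\mathbf{P}_t := \mathbb{E}[\mathbf{x}_t\mathbf{x}_t^\top]$ and $\mathbf{p}_t := \mathrm{vec}(\mathbf{P}_t)\in\mathbb{R}^{n^2}$. The plan is to show that $\mathbf{p}_t$ obeys a linear recursion driven by $\tilde{\mathbf A}$ and a constant forcing term, and then to bound the growth of $\tilde{\mathbf A}^t$ using its Jordan form.

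\textbf{Step 1: second-moment recursion.} Write $\mathbf{M}_t := \mathbf A+\sum_k \mathbf{u}_t[k]\mathbf B_k$, so that $\mathbf{x}_{t+1}=\mathbf{M}_t\mathbf{x}_t+\mathbf{w}_t$. By \Cref{ass:input,ass:noise}, $\mathbf{x}_t$ is a function of $(\mathbf{x}_0,\mathbf{u}_{0:t-1},\mathbf{w}_{0:t-1})$. Hence $\mathbf{u}_t$ and $\mathbf{w}_t$ are independent of $\mathbf{x}_t$ and of each other. Expanding $\mathbf{x}_{t+1}\mathbf{x}_{t+1}^\top$ and using $\mathbb{E}[\mathbf{w}_t]=\mathbf 0$, the cross terms vanish, and
\[
\mathbf{P}_{t+1}=\mathbb{E}[\mathbf{M}_t\mathbf{P}_t\mathbf{M}_t^\top]+\mathbf{W},\qquad \mathbf{W}:=\mathrm{Cov}(\mathbf{w}_t).
\]
Decompose $\mathbf{M}_t=\mathbf F+\sum_k(\mathbf{u}_t[k]-\mathbb{E}\mathbf{u}_t[k])\mathbf B_k$. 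The cross terms again have zero mean, which gives
\[
\mathbb{E}[\mathbf{M}_t\mathbf{P}\mathbf{M}_t^\top]=\mathbf F\mathbf P\mathbf F^\top+\sum_{k,\ell}\gamma_{k\ell}\mathbf B_k\mathbf P\mathbf B_\ell^\top .
\]
Apply $\mathrm{vec}(\mathbf{X}\mathbf{P}\mathbf{Y}^\top)=(\mathbf Y\otimes\mathbf X)\mathrm{vec}(\mathbf P)$. Since $\gamma_{k\ell}=\gamma_{\ell k}$, this yields $\mathbf{p}_{t+1}=\tilde{\mathbf A}\mathbf p_t+\mathbf{w}_{\mathrm{vec}}$ with $\mathbf{w}_{\mathrm{vec}}:=\mathrm{vec}(\mathbf W)$. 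Unrolling gives
\[
\mathbf p_t=\tilde{\mathbf A}^t\mathbf p_0+\sum_{s=0}^{t-1}\tilde{\mathbf A}^s\mathbf w_{\mathrm{vec}} .
\]
Finally, $\mathbb{E}\|\mathbf x_t\|_2^2=\mathrm{tr}(\mathbf P_t)\le \sqrt n\,\|\mathbf p_t\|_2$.

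\textbf{Step 2: polynomial bound on powers.} Let $\tilde{\mathbf A}=\mathbf S\mathbf J\mathbf S^{-1}$ be a Jordan decomposition, and let $q$ be the size of the largest Jordan block. For a block with eigenvalue $|\lambda|\le1$, the entries of $\mathbf J_\lambda^t$ are $\binom{t}{j}\lambda^{t-j}$ for $j<q$, each bounded by $t^{j}\le 1+t^{q-1}$. Hence
\[
\|\tilde{\mathbf A}^t\|_2\le C_0(1+t^{q-1}),\qquad C_0=q\,\kappa(\mathbf S).
\]
This is the step where $\rho(\tilde{\mathbf A})\le1$ is used. Summing over $s$ gives
\[
\Big\|\sum_{s<t}\tilde{\mathbf A}^s\Big\|\le C_0\big(t+t^{q}\big)\le 2C_0(1+t^{q}).
\]
Combining the two terms,
\[
\mathbb{E}\|\mathbf x_t\|_2^2\le \sqrt n\,C_0\big[(1+t^{q-1})\|\mathbf p_0\|+2(1+t^{q})\|\mathbf w_{\mathrm{vec}}\|\big]\le c^{\mathrm{PMS}}(1+t^{r}),
\]
with $r=q\le n^2$ and $\|\mathbf p_0\|<\infty$ by the assumption $\mathbb{E}\|\mathbf x_0\|^2<\infty$.

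\textbf{Main obstacle.} The main work is the bookkeeping in Step 1: checking that the independence structure lets every cross term factor, which needs $\mathbf u_t,\mathbf w_t$ to be independent of $\mathbf x_t$, and getting the Kronecker ordering to match $\mathbf B_\ell\otimes\mathbf B_k$. The exponent also needs care. The crude count gives $r=q$; if $r\in\{0,\dots,n^2\}$ is required strictly, note that $q\le n^2$. One could sharpen this by observing that when $\rho(\tilde{\mathbf A})<1$ the sum is bounded and $r=0$. The potential extra factor of $t$ arises only from unit-modulus eigenvalues, and it is still absorbed into $r\le n^2$.
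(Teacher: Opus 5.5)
Your proposal is correct and follows the paper's route: the vectorized second-moment recursion $\mathrm{vec}(\Sigma_{t+1})=\tilde{\mathbf A}\,\mathrm{vec}(\Sigma_t)+\mathrm{vec}(\Sigma_w)$, unrolling, a Jordan-form polynomial bound on $\|\tilde{\mathbf A}^t\|_2$ using $\rho(\tilde{\mathbf A})\le 1$, and finally $\mathrm{tr}(\Sigma_t)\le\sqrt n\,\|\mathrm{vec}(\Sigma_t)\|_2$. The only differences are minor: you keep the general centering $\mathbf F$ where the paper immediately sets $\mathbf F=\mathbf A$ and $\gamma_{k\ell}=\sigma_u^2\mathbf 1_{\{k=\ell\}}$, and you track the largest Jordan block size $q$ to obtain $r=q\le n^2$ where the paper simply takes $r=n^2$.
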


\begin{proof}
Define the state variance matrix $\Sigma_t:=\mathbb E[\mathbf{x}_t \mathbf{x}_t^\top]$.
Expanding $\mathbf{x}_{t+1} \mathbf{x}_{t+1}^\top$ and taking expectations yields
\begin{equation*}
\Sigma_{t+1}
=
\mathbf F \Sigma_t \mathbf F^\top
+
\sum\nolimits_{k=1}^m\sum\nolimits_{\ell=1}^m \gamma_{k\ell} \mathbf B_k \Sigma_t \mathbf B_\ell^\top
+
\Sigma_w,
\end{equation*}
where $\mathbf F:= \mathbf A+\sum_{k=1}^m \mathbb E[\mathbf u_t[k]] \mathbf B_k,\
\gamma_{k\ell}:=\mathbb E[\mathbf u_t[k] \mathbf u_t[\ell]]-\mathbb E[\mathbf u_t[k]]\mathbb E[\mathbf u_t[\ell]]$, and $\Sigma_w:=\mathbb E[\mathbf w_t \mathbf w_t^\top]$. Under \Cref{ass:input}, we have \(\mathbf{F}=\mathbf A\) and
\(\gamma_{k\ell}=\sigma_u^2\mathbf 1_{\{k=\ell\}}\). Hence, $\tilde{\mathbf A}$ can be written as $\tilde{\mathbf A} = \mathbf A \otimes \mathbf A + \sigma_u^2 \sum\nolimits_{k=1}^m \mathbf B_k \otimes \mathbf B_k$. Vectorizing both sides and using
$\mathrm{vec}(\mathbf M \mathbf X \mathbf N)=(\mathbf N^\top\otimes \mathbf  M)\mathrm{vec}(\mathbf X)$, we obtain
\begin{equation}
\mathrm{vec}(\Sigma_{t+1})
=
\tilde{\mathbf{A}} \,\mathrm{vec}(\Sigma_t)
+
\mathrm{vec}(\Sigma_w).
\label{eq:P2_recursion}
\end{equation}
Iterating \eqref{eq:P2_recursion} yields
\begin{equation}
\mathrm{vec}(\Sigma_t)
=
\tilde{\mathbf{A}}^t\,\mathrm{vec}(\Sigma_0)
+
\sum\nolimits_{i=0}^{t-1}\tilde{\mathbf{A}}^i\,\mathrm{vec}(\Sigma_w).
\label{eq:At-iterate}
\end{equation} 
Denote $\dim(\tilde{\mathbf{A}})$ by $ n^2$. Since \(\rho(\tilde{\mathbf{A}})\le 1\) and \(\tilde{\mathbf{A}}\in\mathbb{R}^{n^2\times n^2}\), by the Jordan decomposition, there exists a constant $C_{\tilde A}<\infty$ such that for all $t\ge 1$,
\begin{equation}
\|\tilde{\mathbf{A}}^t\|_2 \le C_{\tilde A}(1+t^{n^2-1}).
\label{eq:At-bound}
\end{equation}
Consequently,
\begin{equation}
\sum\nolimits_{i=0}^{t-1}\|\tilde{\mathbf{A}}^i\|_2
\le C'_{\tilde A}(1+t^{n^2}),
\label{eq:sum-At-bound}
\end{equation}
for some constant $C'_{\tilde A}<\infty$. Applying \eqref{eq:At-bound}--\eqref{eq:sum-At-bound} to \eqref{eq:At-iterate} gives
\begin{align*}
\|\mathrm{vec}(\Sigma_t)\|_2
&\le
\|\tilde{\mathbf{A}}^t\|_2\,\|\mathrm{vec}(\Sigma_0)\|_2
+
\sum\nolimits_{i=0}^{t-1}\|\tilde{\mathbf{A}}^i\|_2\,\|\mathrm{vec}(\Sigma_w)\|_2 \notag
\\
&\le
C_{\tilde A}(1+t^{n^2-1})\|\mathrm{vec}(\Sigma_0)\|_2
+
C'_{\tilde A}(1+t^{n^2})\|\mathrm{vec}(\Sigma_w)\|_2 \le C_\Sigma(1+t^{n^2}),
\end{align*}
for some constant $C_\Sigma<\infty$. Finally, using $\mathbb E\|\mathbf x_t\|_2^2=\mathrm{tr}(\Sigma_t)$ and
$
\mathrm{tr}(\Sigma_t)
\le \sqrt{n}\|\Sigma_t\|_F
=\sqrt{n}\|\mathrm{vec}(\Sigma_t)\|_2,
$
we obtain $\mathbb E\|\mathbf{x}_t\|_2^2
\le c^{\mathrm{PMS}}(1+t^{r}),\ \forall t$, where we can always take $r=n^2$ and $c^{\mathrm{PMS}}:=\sqrt n\,C_\Sigma$.
\end{proof}

Next, we relate the BMSB condition to persistent excitation, which enables the downstream concentration analysis.

\begin{lemma}[BMSB for $\mathbf{z}_t$]\label{lem:BMSB}
Let the filtration $\mathcal{F}_t := \sigma\big(\mathbf{x}_0,\{\mathbf{w}_s,\mathbf{z}_{s+1}\}_{s=0}^{t-1}\big)$. Then the $\{\mathcal{F}_t\}$-adapted process $\{\mathbf{z}_t\}_{t\ge 0}$ satisfies the $(1,k_z^2\mathbf{I}_{n+nm},p_z)$-BMSB condition for some constants $k_z,p_z>0$, where $\sigma(\cdot)$ denotes the sigma-algebra generated by the random variables.
\end{lemma}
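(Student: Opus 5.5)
The BMSB condition with $k=1$ asks that, for every unit vector $\lambda=(\lambda_1,\lambda_2)\in\mathbb{R}^{n}\times\mathbb{R}^{nm}$ and every $t$,
\[
\mathbb{P}\big(|\lambda^\top \mathbf{z}_{t+1}|\ge k_z\mid\mathcal{F}_t\big)\ge p_z .
\]
The plan is to exploit the fresh randomness in $\mathbf{z}_{t+1}$ given $\mathcal{F}_t$.

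Since $\mathbf{x}_t$ is measurable with respect to $\mathcal{F}_t$ (it is a block of $\mathbf{z}_t$, or $\mathbf{x}_0$), and $\mathbf{u}_t$ can be recovered from $\mathbf{z}_t$ when $\mathbf{x}_t\neq 0$, we write
\[
\mathbf{x}_{t+1}=\mathbf{m}_t+\mathbf{w}_t ,
\]
where $\mathbf{m}_t:=\mathbf{\Theta}_\star\mathbf{z}_t$ is determined by $\mathcal{F}_t$. The filtration as written contains $\mathbf{w}_t$ only through $\mathbf{w}_s$ with $s\le t-1$, so $\mathbf{w}_t$ is fresh. This needs care: since $\mathbf{x}_{t+1}=\mathbf{\Theta}_\star\mathbf{z}_t+\mathbf{w}_t$ and $\mathbf{z}_{t}$ is included only up to index $t$, the filtration contains $\mathbf{z}_t$ but not $\mathbf{x}_{t+1}$. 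Hence $\mathbf{w}_t$ and $\mathbf{u}_{t+1}$ are independent of $\mathcal{F}_t$ and of each other. Then
\[
\lambda^\top\mathbf{z}_{t+1}=\lambda_1^\top\mathbf{x}_{t+1}+\lambda_2^\top(\mathbf{u}_{t+1}\otimes\mathbf{x}_{t+1})=\mathbf{x}_{t+1}^\top M(\mathbf{u}_{t+1})\lambda ,
\]
where
\[
M(\mathbf{u})\lambda:=\lambda_1+\textstyle\sum_i \mathbf{u}[i]\lambda_2^{(i)} ,
\]
and $\lambda_2^{(i)}\in\mathbb{R}^n$ denotes the $i$th block of $\lambda_2$.

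\textbf{Step 1 (excitation from $\mathbf{u}_{t+1}$).} Write $\mathbf{v}(\mathbf{u}):=\lambda_1+\sum_i\mathbf{u}[i]\lambda_2^{(i)}$. Using $\mathbb{E}\mathbf{u}=0$ and $\mathbb{E}\mathbf{u}\mathbf{u}^\top=\sigma_u^2 I$,
\[
\mathbb{E}\|\mathbf{v}(\mathbf{u}_{t+1})\|^2=\|\lambda_1\|^2+\sigma_u^2\|\lambda_2\|^2\ge\min(1,\sigma_u^2).
\]
Since $\|\mathbf{v}\|^2\le (1+\sqrt{m}u_{\max})^2$ almost surely, Paley--Zygmund gives an event $E_u$, depending only on $\mathbf{u}_{t+1}$, with
\[
\mathbb{P}(E_u)\ge p_1:=\tfrac{\min(1,\sigma_u^2)}{4(1+\sqrt m u_{\max})^2}
\]
on which $\|\mathbf{v}\|^2\ge \tfrac12\min(1,\sigma_u^2)=:c_v^2$.

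\textbf{Step 2 (anti-concentration from $\mathbf{w}_t$).} Condition on $\mathcal{F}_t$ and on $\mathbf{u}_{t+1}\in E_u$, and fix $\mathbf{v}$. Then
\[
\lambda^\top\mathbf{z}_{t+1}=\mathbf{v}^\top\mathbf{m}_t+\|\mathbf{v}\|\,\hat{\mathbf{v}}^\top\mathbf{w}_t ,
\]
with $\hat{\mathbf{v}}=\mathbf{v}/\|\mathbf{v}\|$. By \Cref{ass:noise}, $Y:=\hat{\mathbf{v}}^\top\mathbf{w}_t$ is symmetric and log-concave with variance at least $\sigma_w^2$. The key fact is that for a symmetric unimodal $Y$, the probability $\mathbb{P}(|a+Y|\ge s)$ is minimized at $a=0$ (Anderson's lemma in one dimension). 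So it suffices to lower bound $\mathbb{P}(|Y|\ge s)$. For log-concave $Y$ the density is bounded by $C/\sqrt{\mathrm{Var}Y}$, with an absolute $C$ (e.g. $1/\sqrt{2}$ suffices for symmetric log-concave densities). Hence
\[
\mathbb{P}(|Y|<s)\le 2Cs/\sigma_w ,
\]
and taking $s=\sigma_w/(4C)$ gives $\mathbb{P}(|Y|\ge s)\ge 1/2$.

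\textbf{Step 3 (combine).} Set $k_z:=c_v\sigma_w/(4C)$ and $p_z:=p_1/2$. Then
\[
\mathbb{P}\big(|\lambda^\top\mathbf{z}_{t+1}|\ge k_z\mid\mathcal{F}_t\big)\ge \mathbb{E}\big[\mathbf{1}_{E_u}\,\mathbb{P}(|\mathbf{v}^\top\mathbf{m}_t+\|\mathbf{v}\|Y|\ge k_z\mid \mathcal{F}_t,\mathbf{u}_{t+1})\big]\ge p_z ,
\]
uniformly in $\lambda$ and $t$. These constants are independent of $\lambda$ and $t$, which is what the statement requires.

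\textbf{Main obstacle.} The delicate part is the measurability and independence bookkeeping. We must check that $\mathbf{w}_t$ is independent of $(\mathcal{F}_t,\mathbf{u}_{t+1})$ and that its law conditional on them is unchanged. We must also justify the unimodal shift inequality for a log-concave marginal, which is where symmetry is essential because Gaussian tail tools are unavailable. If instead the filtration were read to include $\mathbf{w}_t$, I would fall back on taking $k=2$, or use $\mathbf{u}_{t+1}$ alone. Note that the latter fails when $\mathbf{x}_{t+1}$ is small, so the noise route is preferred.
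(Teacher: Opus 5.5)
Your argument is correct and follows the paper's overall structure. You write $\lambda^\top\mathbf z_{t+1}=\langle \mathbf v(\mathbf u_{t+1}),\mathbf\Theta_\star\mathbf z_t+\mathbf w_t\rangle$, get a coefficient-size event from Paley--Zygmund on $\|\mathbf v(\mathbf u_{t+1})\|_2^2$, discard the $\mathcal F_t$-measurable shift with the symmetric-unimodal (one-dimensional Anderson) inequality, and multiply the two conditional probabilities.

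The real difference is the anti-concentration step for the centered projection $Y=\hat{\mathbf v}^\top\mathbf w_t$. The paper applies Paley--Zygmund to $Y^2$ together with the symmetric log-concave kurtosis bound $\mathbb E[Y^4]\le 6(\mathbb E[Y^2])^2$, which gives $\mathbb P(|Y|\ge\sigma_w/2)\ge 3/32$. You instead use the sup-density bound $\|f_Y\|_\infty^2\operatorname{Var}(Y)\le 1/2$ (sharp for the Laplace law), which gives $\mathbb P(|Y|<s)\le\sqrt2\,s/\sigma_w$.

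What your route buys:
\begin{itemize}
\item It is a true small-ball estimate: its guaranteed probability tends to $1$ as $s\to0$. Paley--Zygmund with the universal constant $6$ can never certify more than $1/6$.
\item Concretely, your constants give the same $k_z=\sqrt{\min\{1,\sigma_u^2\}}\,\sigma_w/4$ as the paper. Your $p_z$ is larger by a factor of at least $8/3$, even though your almost-sure bound $(1+\sqrt m\,u_{\max})^2$ is looser than the paper's $1+m u_{\max}^2$. (The paper gets that bound from Cauchy--Schwarz with $\|\mathbf v_0\|_2^2+\|\mathbf V\|_F^2=1$.)
\end{itemize}

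What the paper's route buys: it uses only a second-to-fourth moment comparison. It therefore applies verbatim to any noise whose one-dimensional projections are symmetric, unimodal, and have uniformly bounded kurtosis, with no density bound needed.

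One notational point: in your Step 3 the outer expectation should be conditional on $\mathcal F_t$.
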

\noindent
\textit{Proof Sketch. }
Compared with the Gaussian-noise settings, the proof cannot rely on explicit Gaussian tail formulas for shifted one-dimensional projections. Instead, the log-concave setting requires a separate anti-concentration argument based on symmetry and unimodality. The proof proceeds in three steps. First, for a fixed direction $\mathbf v$, we decompose $\langle \mathbf v,\mathbf z_{j+1}\rangle
= \langle \mathbf v_0+\mathbf V\mathbf u_{j+1},\,\mathbf{\Theta}_\star \mathbf z_j+\mathbf w_j\rangle$
and reduce the desired lower bound to the intersection of two events: one controlling the size of the random coefficient $\|\mathbf v_0+\mathbf V\mathbf u_{j+1}\|_2$, and the other controlling the excitation contributed by the disturbance along this random direction. Second, we lower bound the disturbance event by exploiting symmetry and log-concavity of one-dimensional projections of $\mathbf w_j$, together with the Paley--Zygmund inequality and a fourth-to-second moment comparison for symmetric log-concave random variables. Third, we lower bound the input event using another Paley--Zygmund argument and combine the two bounds to obtain the BMSB condition. The complete proof is provided in \Cref{app:lemBMSB}.

With the above technical setup, we now establish the finite-sample convergence guarantee for the SME feasible sets $\mathbb{S}_T$. 
The following result shows that the SME feasible set has sample complexity
$\tilde{\mathcal{O}}(n^{3.5}m^5/\epsilon)$ in terms of the problem dimensions under the bounded-noise~regime.

\begin{theorem}[Sample complexity guarantee]\label{thm:sample_complexity}
Suppose \Cref{ass:input,ass:noise,ass:tight-bound-w} hold, $\mathbb E\|x_0\|_2^2<\infty$, and
$\rho(\widetilde{\mathbf{A}})\le 1$. Let $k_z,p_z$ be the constants in \Cref{lem:BMSB},
let $r$ be the polynomial-growth exponent in \Cref{lem:PMS}, and let $C_z$ be
the truncation constant appearing in \Cref{lem:E2c,lem:E1E2}. For any $\eta\in(0,1)$ and any $\epsilon\in(0, \frac{16\sqrt n\,\epsilon_0}{k_zp_z}]$, define $M:=\sqrt{\frac{6C_zT^{r+1}}{\eta}}$, and $\epsilon_z
:=
\min\left\{
\frac{k_z^2p_z^2\eta}{192C_zT^{r+1}},
\frac14
\right\},\
\epsilon_\gamma
:=
\min\left\{
\frac{k_zp_z\sqrt{\eta}}{16\sqrt{6nC_zT^{r+1}}},
\frac14
\right\}$. If the number of samples $T$ satisfies 
\[T
\ge
\frac{256\sqrt n}{k_zp_z^3c_w\,\epsilon}
(1+\log\frac{1632T}{\eta}
+5(n+nm)\log\frac{n+nm}{\epsilon_z})
(1+\log\frac{1632}{\eta}
+5(n^2+n^2m)\log\frac{n^2+n^2m}{\epsilon_\gamma}),
\] 
then $\mathbb P\left(\operatorname{diam}(S_T)>\epsilon \right)\le \eta$ holds. The constants $r,C_z,k_z,p_z,c_w,\epsilon_0$ are fixed by \Cref{lem:PMS,lem:BMSB,lem:E2c,lem:E1E2}, and \Cref{ass:tight-bound-w}. 
\end{theorem}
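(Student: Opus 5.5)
We follow the SME contraction argument of \cite{Li2024SME-LTI}, adapted to the bilinear regressor $\mathbf z_t$ and the log-concave noise model. The first step is a reduction. If $\mathbf\Theta\in\mathbb S_T$, write $\mathbf\Gamma:=\mathbf\Theta-\mathbf\Theta_\star$. Then $\|\mathbf w_t-\mathbf\Gamma\mathbf z_t\|_\infty\le w_{\max}$ for all $t<T$, so for every row $j$ and sign $b$ we have $b\,\mathbf w_t[j]\le w_{\max}+b\,(\mathbf\Gamma\mathbf z_t)[j]$. Now suppose $\|\mathbf\Gamma\|_F>\epsilon/2$. Then some row $\gamma_j^\top$ has $\|\gamma_j\|_2\ge \epsilon/(2\sqrt n)$. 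Set $b=-\mathrm{sign}(\gamma_j^\top\mathbf z_t)$. At every time $t$ where $|\gamma_j^\top\mathbf z_t|\ge \tfrac{k_z}{2}\|\gamma_j\|_2$, the noise must then avoid the boundary slab $\{b\,\mathbf w_t[j]\ge w_{\max}-\tfrac{k_z\epsilon}{4\sqrt n}\}$. Therefore $\{\operatorname{diam}(\mathbb S_T)>\epsilon\}$ is contained in the event that some direction $\gamma$ with $\|\gamma\|_2\ge\epsilon/(2\sqrt n)$ has many \emph{excited} times and the noise avoids the boundary slab at all of them. Two facts then bound the probability of this event. The first is \Cref{ass:tight-bound-w}, which gives slab probability at least $c_w k_z\epsilon/(4\sqrt n)$. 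This needs the slab width to be at most $\epsilon_0$, which is exactly the constraint $\epsilon\le 16\sqrt n\epsilon_0/(k_zp_z)$ with slack. The second is that $\mathbf w_t$ is independent of $\mathbf z_t$. Consequently the probability of avoiding the slab over $N$ excited times is at most $(1-c_wk_z\epsilon/(4\sqrt n))^N\le \exp(-c\,N\epsilon/\sqrt n)$.

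The second step is to show that there are many excited times, uniformly over directions. Here we use \Cref{lem:BMSB}: the process $\{\mathbf z_t\}$ is $(1,k_z^2\mathbf I,p_z)$-BMSB. A martingale Azuma/Freedman argument on the indicators $\mathbf 1\{|\lambda^\top\mathbf z_t|\ge k_z\}$ then gives, for each fixed unit $\lambda$, that $\#\{t:|\lambda^\top\mathbf z_t|\ge k_z\}\ge p_zT/2$ except with probability $e^{-p_z^2T/8}$. To make this uniform over all $\lambda$, we take an $\epsilon_z$-net of the unit sphere in $\mathbb R^{n+nm}$, of cardinality at most $(3/\epsilon_z)^{n+nm}$. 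To pass from the net to arbitrary directions we need $\|\mathbf z_t\|_2\le M$ for all $t<T$. For this we use the truncation of \Cref{lem:E2c,lem:E1E2}: by \Cref{lem:PMS} together with bounded inputs, $\mathbb E\sum_t\|\mathbf z_t\|_2^2\le C_zT^{r+1}$, so Markov's inequality gives $\max_t\|\mathbf z_t\|\le M=\sqrt{6C_zT^{r+1}/\eta}$ with failure probability $\eta/6$. The net resolution $\epsilon_z\approx k_z^2p_z^2\eta/(C_zT^{r+1})$ is chosen so that $M\epsilon_z\ll k_z/2$. The stated definitions of $M$ and $\epsilon_z$ appear to encode precisely this choice. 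Together these produce the first logarithmic factor $1+\log(T/\eta)+(n+nm)\log((n+nm)/\epsilon_z)$.

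The third step, which is the main obstacle, is the union bound over $\mathbf\Gamma$ for the slab-avoidance event. This requires care because the set of excited times depends on $\mathbf\Gamma$ and on the data, and because $\mathbf\Gamma$ ranges over a continuum in $\mathbb R^{n\times(n+nm)}$. We would discretize $\mathbf\Gamma/\|\mathbf\Gamma\|_F$ with an $\epsilon_\gamma$-net of size $(3/\epsilon_\gamma)^{n^2+n^2m}$, which accounts for the second logarithmic factor. Given $\|\mathbf z_t\|\le M$, the resolution $\epsilon_\gamma\sim k_zp_z\sqrt\eta/\sqrt{nC_zT^{r+1}}$ perturbs $\gamma_j^\top\mathbf z_t$ by less than a quarter of the excitation threshold, so the slab-avoidance event for $\mathbf\Gamma$ implies a slightly narrower one for its net point. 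To control the dependence between the excited set and the noise, we would process times sequentially. Conditioned on $\mathcal F_t$, the quantity $\mathbf z_t$ is fixed, so the event ``$t$ is excited and $\mathbf w_t$ avoids the slab'' is a supermartingale-type event. The product of conditional avoidance probabilities is then at most $\exp(-c\,\epsilon\, p_zT/\sqrt n)$ whenever at least $p_zT/2$ times are excited; if this fails I would use a peeling/stopping-time argument instead. Summing over the net points and adding the failure probabilities of the first two steps, we require $\exp(-c\epsilon p_z k_z c_w T/\sqrt n)\cdot|\text{net}|\le\eta/6$. Solving this for $T$, with the two logarithmic factors carried through in product form (which is how the paper states the bound), yields the claimed condition and completes the proof that $\mathbb P(\operatorname{diam}(\mathbb S_T)>\epsilon)\le\eta$.
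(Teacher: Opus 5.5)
Your route is sound but differs from the paper's. The paper first builds a \emph{block} persistent-excitation event $\mathcal E_2$: every block of length $\kappa\asymp L_z/p_z^2$ has Gram matrix $\succeq a_1^2\mathbf I$. This comes from the small-ball lower tail of Simchowitz et al.\ on each block, a sphere net, and a union over the $T/\kappa$ blocks. Then, for each net point $\Gamma_i$, the paper tests the noise only once per block, at the first $t$ in the block with $\|\Gamma_i\mathbf z_t\|_\infty\ge a_1/\sqrt n$. Such a $t$ exists on $\mathcal E_2$ because $\operatorname{tr}(\Gamma_i\mathbf K_k\Gamma_i^\top)\ge a_1^2$, and this gives $(1-q)^{\lfloor (T-1)/\kappa\rfloor}$. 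You instead reduce to a single row, count excited times globally via Azuma on the BMSB indicators, and test the noise at \emph{every} excited time. This gives a sharper sufficient condition, roughly $T\gtrsim L_z/p_z^2$ and $T\gtrsim L_\gamma/(p_z q)$ separately. It is additive rather than multiplicative in the two logarithmic factors, and has a better power of $p_z$ (for small $\epsilon$, $p_z^{-1}$ rather than $p_z^{-3}$). The product in the theorem is an artifact of (block length)$\times$(number of blocks), so your argument does not ``carry it through.'' Instead, check that the stated condition implies yours. It does, because $L_z,L_\gamma\ge 1$ and $q=c_wk_zp_z\epsilon/(16\sqrt n)\le c_w\epsilon_0\le 1$. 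What the paper's block structure buys in exchange is a reusable block-PE statement and only one conditional-independence step per block.

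Two points need repair.

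First, your slab width $k_z\epsilon/(4\sqrt n)$ is \emph{not} at most $\epsilon_0$ under $\epsilon\le 16\sqrt n\,\epsilon_0/(k_zp_z)$; it can reach $4\epsilon_0/p_z$. Use monotonicity of the slab probability in its width, giving $q\ge c_w\min\{k_z\epsilon/(4\sqrt n),\epsilon_0\}\ge c_wk_zp_z\epsilon/(16\sqrt n)$. This is exactly why the paper's slab has width $k_zp_z\epsilon/(16\sqrt n)$.

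Second, in Step 3 you cannot condition on ``at least $p_zT/2$ excited times,'' since that event is not adapted. Commit to the supermartingale $X_t=(1-q)^{-N_t}\,\mathbf 1\{\mathbf w_s \text{ avoids the slab at every excited } s<t\}$, where $N_t$ counts excited times before $t$. Define excitedness and the sign $b_t$ through a fixed row $\gamma_{i,j}$ of the net point (take $j$ to be its largest row), so both are $\mathcal F_t$-measurable, rather than through the data-dependent $\mathbf\Gamma$. Then $\mathbb E X_T\le 1$ and Markov's inequality give $(1-q)^{p_zT/2}$ on the excitation event. The transfer from $\mathbf\Gamma$ to its net point works with the paper's $\epsilon_\gamma$, because $\epsilon_\gamma M\le k_zp_z/(16\sqrt n)$ is well below your threshold $k_z/(2\sqrt n)$.

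A small correction: the paper's $\epsilon_z$ is tuned to the Gram-matrix perturbation $2\epsilon_zM^2\le a_1^2$, not to $\epsilon_zM\le k_z/2$. It does, however, imply the latter.
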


\begin{proof}
We analyze the convergence of the SME feasible set $\mathbb{S}_T$ by working with the induced feasible set of parameter errors. Let $n_z:=n+nm$. For any candidate $\mathbf{\Theta}\in \mathbb{S}_T$, write $\gamma=\mathbf{\Theta}-\mathbf{\Theta}_\star$. Since $\mathbf{x}_{t+1}=\mathbf{\Theta}_\star \mathbf{z}_t+\mathbf{w}_t$, feasibility of $\mathbf{\Theta}$ is equivalent to $\mathbf{w}_t-\gamma \mathbf{z}_t\in \mathbb{W}$ for all $t=0,\ldots,T-1$. Hence $\mathbb{S}_T=\mathbf{\Theta}_\star+\Gamma_T$, where
\begin{align*}
\Gamma_T
:=
\bigcap\nolimits_{t=0}^{T-1}
\left\{\gamma\in\mathbb{R}^{n\times n_z}:\ \mathbf{w}_t-\gamma \mathbf{z}_t \in \mathbb{W}\right\}.
\end{align*}
Since $\mathbb{S}_T=\mathbf{\Theta}_\star+\Gamma_T$, translation invariance gives
\begin{align*}
\mathrm{diam}(\mathbb{S}_T) = \mathrm{diam}(\Gamma_T)
=\sup\nolimits_{\gamma,\gamma'\in\Gamma_T}\|\gamma-\gamma'\|_F
\le 2\sup\nolimits_{\gamma\in\Gamma_T}\|\gamma\|_F .
\end{align*}
Thus, with $\mathcal{E}_1 := \left\{\exists\gamma\in\Gamma_T: \|\gamma\|_F\ge \epsilon/2\right\}$, we have $\{\operatorname{diam}(\mathbb{S}_T)>\epsilon \} \subseteq \mathcal{E}_1$ and therefore $\mathbb{P}(\operatorname{diam}(\mathbb{S}_T)>\epsilon)\le \mathbb{P}(\mathcal{E}_1)$.

Now let $d_\gamma:=n^2+n^2m$, $L_z:=1+\log\frac{1632T}{\eta}+5n_z\log\frac{n_z}{\epsilon_z},\ L_\gamma:=1+\log\frac{1632}{\eta}+5d_\gamma\log\frac{d_\gamma}{\epsilon_\gamma}$, and $a_1:=k_zp_z/4$. Choose an integer $\kappa$ such that $8L_z/p_z^2\le \kappa\le 16L_z/p_z^2$. For convenience, we can always take $\kappa=\lceil 8L_z/p_z^2\rceil$. Define the block PE event
\begin{align}
\mathcal{E}_2
:=
\Bigg\{ 
\frac{1}{\kappa} \sum_{t=1}^{\kappa}\mathbf{z}_{k\kappa+t}\mathbf{z}_{k\kappa+t}^{\top}
\succeq a_1^2 \mathbf{I}_{n_z},
\forall\,0 \le k \le \left\lfloor\frac{T - 1}{\kappa}\right\rfloor - 1
\Bigg\},\label{eq:E2_eq}
\end{align}
Then $\mathbb{P}(\operatorname{diam}(\mathbb{S}_T)>\epsilon)
\le \mathbb{P}(\mathcal{E}_2^c)+\mathbb{P}(\mathcal{E}_1\cap \mathcal{E}_2)$ holds for any $\epsilon > 0$. We first bound $\mathbb{P}(\mathcal{E}_2^c)$. With $M=\sqrt{6C_zT^{r+1}/\eta}$, the truncation radius in \Cref{lem:E2c} satisfies
$\epsilon_z(M)=\min\{a_1^2/(2M^2),1/4\}=\epsilon_z$, and $C_zT^{r+1}/M^2=\eta/6$. \Cref{lem:E2c} gives
\begin{align*}
\mathbb{P}(\mathcal{E}_2^c)\le \frac{T}{\kappa}v_{\epsilon_z,n_z}\exp\!\left(-\frac{\kappa p_z^2}{8}\right)+\frac{\eta}{6}.
\end{align*}
Using the logarithmic covering bound $\log v_{\epsilon_z,n_z}\le \log 544+5n_z\log(n_z/\epsilon_z)$ from \Cref{lem:sphere-cover} and $\kappa p_z^2/8\ge L_z$, the first term is at most
\begin{align*}
T\exp\!\left(\log 544+5n_z\log\frac{n_z}{\epsilon_z}-L_z\right)
\le \frac{\eta}{3e}\le \frac{\eta}{3}.
\end{align*}
Hence $\mathbb{P}(\mathcal{E}_2^c)\le \eta/2$.

It remains to bound $\mathbb{P}(\mathcal{E}_1\cap \mathcal{E}_2)$. Apply \Cref{lem:E1E2} with $\delta=\epsilon$, and since $\epsilon\le 16\sqrt n\,\epsilon_0/(k_zp_z)$ by theorem hypothesis, and from \Cref{lem:E1E2}, $\bar\epsilon_\delta=\min\{\epsilon_\delta,\epsilon_0\} = \epsilon_\delta=\frac{a_1\epsilon}{4\sqrt n}
=\frac{k_zp_z\epsilon}{16\sqrt n}$. By \Cref{ass:tight-bound-w},
$q_w(\epsilon_\delta)\ge q:=c_wk_zp_z\epsilon/(16\sqrt n)$, and $q\le1$. \Cref{lem:E1E2} therefore yields
\begin{align*}
\mathbb{P}(\mathcal{E}_1\cap \mathcal{E}_2)\le v_\gamma(M)\exp\!\left(-q\left\lfloor\frac{T-1}{\kappa}\right\rfloor\right)+\frac{\eta}{6}.
\end{align*}
By the theorem condition on $T$, which is just,
\begin{align}
T\ge \frac{256\sqrt n}{k_zp_z^3c_w\epsilon}L_zL_\gamma
=
\frac{16L_z}{p_z^2}\frac{L_\gamma}{q},
\label{eq:thm1_condition_on_T}
\end{align} 
and by $\kappa\le16L_z/p_z^2$, we have $T/\kappa\ge L_\gamma/q$. Hence, $q\left\lfloor\frac{T-1}{\kappa}\right\rfloor
\ge q\left(\frac{T}{\kappa}-1\right) \ge L_\gamma-q
\ge L_\gamma-1$. Moreover, the choice of $M$ makes the net radius in \Cref{lem:E1E2} equal to the stated $\epsilon_\gamma$, and the covering bound gives
$\log v_\gamma(M)\le \log 544+5d_\gamma\log(d_\gamma/\epsilon_\gamma)$. Therefore,
\begin{align*}
v_\gamma(M)\exp\!\left(-q\left\lfloor\frac{T-1}{\kappa}\right\rfloor\right)
\le \exp\!\left(\log 544+5d_\gamma\log\frac{d_\gamma}{\epsilon_\gamma}-L_\gamma+1\right)
\le \frac{\eta}{3}.
\end{align*}
Thus, $\mathbb{P}(\mathcal{E}_1\cap \mathcal{E}_2)\le \eta/2$. If the condition \eqref{eq:thm1_condition_on_T} holds, then combining the two bounds gives $\mathbb{P}(\operatorname{diam}(\mathbb{S}_T)>\epsilon)\le \eta$. 
\end{proof}

\section{Simulation}
We consider a structured bilinear system such that $\mathbf{A}$ is diagonal and $\rho(\mathbf{A}) \le 1$, and the ${\mathbf{B}_i}$ are strictly lower triangular. This construction allows us to control the $\rho(\tilde{\mathbf A})$ of this class of bilinear systems. The entries of $\mathbf{A}$ and ${\mathbf{B}_i}$ are randomly generated and then uniformly scaled following the above conditions. We generate the input $\mathbf{u}_t$ \textit{i.i.d.} across time using a truncated Gaussian distribution, i.e., $\mathbf{u}_t \sim \mathcal{N}(\mathbf{0},\mathbf{I})$ truncated to $\{\mathbf{u}: \|\mathbf{u}\|_{\infty}\le 2\}$. The $\mathbf{w}_t$ is generated \textit{i.i.d.} across time from a standard Gaussian distribution truncated to the same range, namely, $\mathbf{w}_t \sim \mathcal{N}(\mathbf{0},\mathbf{I})$ truncated to $\{\mathbf{w}: \|\mathbf{w}\|_{\infty}\le 1\}$. The code used to reproduce the results, along with the matrices $\mathbf{A}$ and $\mathbf{B}_i$, is available on GitHub\footnote{\url{https://github.com/Hongyu-Yi/sys_id_bilinear}}.

We compare two estimators: SME and ordinary least squares (OLS), defined as $\mathbf{\widehat \Theta}_{\mathrm{OLS}} = \arg \min\nolimits_{\mathbf{\Theta}}\  1/2 \sum\nolimits_{t=1}^{T-1} \|\mathbf{x}_{t+1} - \mathbf{\Theta} \mathbf{z}_t\|_2^2$, which has closed-form solution $\mathbf{\widehat \Theta}_{\mathrm{OLS}} = \mathbf{Y} \mathbf{Z}^\top (\mathbf{Z} \mathbf{Z}^\top)^{-1}$ with $\mathbf{Y} = [\mathbf{x}_2,..,\mathbf{x}_T] \in \mathbb{R}^{n \times (T-1)}$ and $\mathbf{Z}=[\mathbf{z}_1,..,\mathbf{z}_{T-1}] \in \mathbb{R}^{(n+nm)\times (T-1)}$.

\begin{figure}[t]
    \centering
    \includegraphics[width=0.65\linewidth]{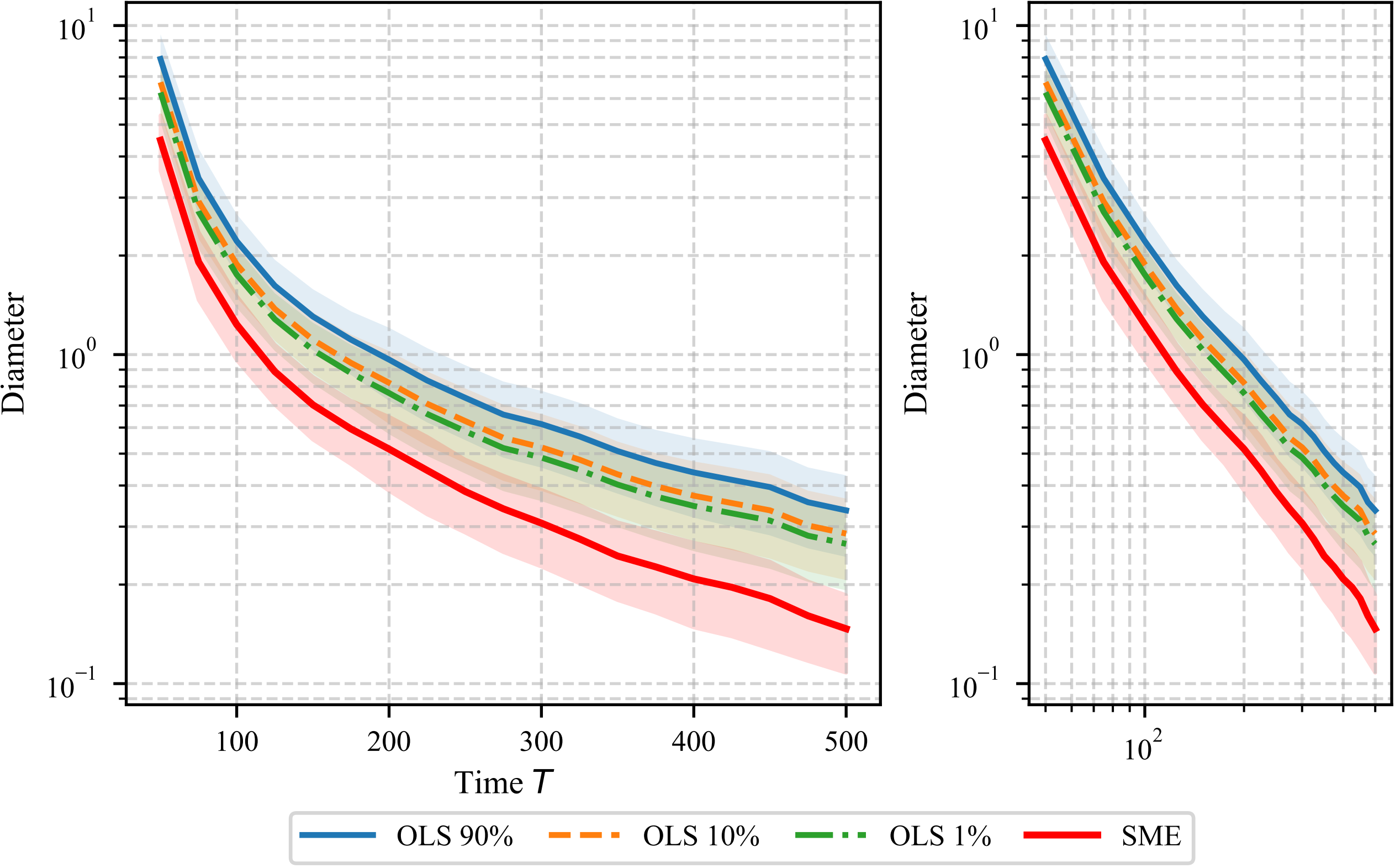}
    \caption{Diameters of Uncertainty Sets Contraction}
    \label{fig:diameter}
\end{figure}

In our simulation, we use the 90\%, 10\% and 1\% confidence regions of the OLS as the baseline uncertainty set constructed from the OLS point estimation $\mathbf{\widehat \Theta}_{\mathrm{OLS}}$. The OLS confidence region diameter\footnote{Here, we use probabilistic uncertainty quantification for OLS as baselines.} is computed as the minimum of the Lemma~E.3 bound in~\cite{simchowitz2020NaiveExploration} and a Wald-type asymptotic ellipsoid bound~\cite{draper1998applied,ljung1999system}. As for the diameter of the SME uncertainty set, we aim at calculating $\mathrm{diam}(\mathbb{S}_T)$, which is nonconvex, and we approximate the diameter as in \cite{Li2024SME-LTI}. We repeat the experiment 12 times and report the 90\% confidence interval. \Cref{fig:diameter} demonstrates that the SME uncertainty set diameter shrinks steadily with $T$, producing tighter uncertainty sets than all OLS-based baselines under different confidence.

\section{Conclusion}
This paper studied set-membership identification for bilinear systems with bounded noises. 
We showed that, despite the trajectory dependence of the bilinear regressor and possible polynomial state growth, the SME feasible set admits a finite-sample convergence guarantee with $\widetilde{\mathcal O}(1/\epsilon)$ sample complexity. 
Numerical results supported the theory and demonstrated clear advantages over an OLS-based baseline. Future work includes extending the finite-sample SME analysis to partially observed LTI systems, where process and measurement noises shape the feasible set and regressors come from input-output histories.

\section{Appendix}

\subsection{\Cref{lem:BMSB}: BMSB for $\mathbf{z}_t$}\label{app:lemBMSB}
\begin{proof}[Proof of \Cref{lem:BMSB}]
The condition that the process $\{\mathbf{z}_t\}_{t\ge 0}$ satisfies $(1,k_z^2 \mathbf{I},p_z)$-BMSB is equivalently expressed as the following: for any $j\geq1$ and a fixed $\mathbf{v} \in \R^{n+nm}$ with $\|\mathbf{v}\|_2=1$, $\mathbb{P}(|\langle \mathbf{v},\mathbf{z}_{j+1}\rangle| \geq k_z \|\mathbf{v}\|_2 | \mathcal{F}_j) \geq p_z$. To proceed with the proof, we write $\langle \mathbf{v},\mathbf{z}_{j+1}\rangle = \langle \mathbf{v}_0 + \mathbf{V} \mathbf{u}_{j+1},\mathbf{x}_{j+1}\rangle = \langle \mathbf{v}_0 + \mathbf{V} \mathbf{u}_{j+1},\mathbf{\Theta}_\star\mathbf{z}_{j}+\mathbf{w}_j\rangle$, where we define the decomposition $\mathbf{v}=[\mathbf{v}_0^\top,\mathbf{v}_1^\top,...,\mathbf{v}_m^\top]^{\top}\in \mathbb{R}^{n+nm}$ with $\mathbf{v}_i\in \mathbb{R}^n$ for all $i=0,\ldots,m$ and the concatenation matrix $\mathbf{V}=[\mathbf{v}_1,...,\mathbf{v}_m]\in \mathbb{R}^{n\times m}$. Thus, $\|\mathbf{v}_0\|_2^2+\|\mathbf{V}\|_F^2=1$. Now we define $3$ events:
\begin{align*}
\mathcal{E}_z &:= \left\{ \left| \left\langle \mathbf{v}_0+\mathbf{V}\mathbf{u}_{j+1},\mathbf{\Theta}_\star\mathbf{z}_j+ \mathbf{w}_{j} \right\rangle \right| \geq k_z \|\mathbf{v}\|_{2} \,\middle|\, \mathcal{F}_j \right\}, \\
\mathcal{E}_w &:= \left\{ \left| \left\langle \mathbf{v}_0 \!+\! \mathbf{V}\mathbf{u}_{j+1}, \mathbf{\Theta}_\star\mathbf{z}_j \!+\! \mathbf{w}_{j} \right\rangle \right|
    \!\ge\! k_0 \left\|\mathbf{v}_0 \!+\! \mathbf{V}\mathbf{u}_{j+1}\right\|_{2} \middle| \mathcal{F}_j \right\}, \\
\mathcal{E}_u &:= \left\{ \left\|\mathbf{v}_0 + \mathbf{V}\mathbf{u}_{j+1}\right\|_{2}
    \ge k_1 \|\mathbf{v}\|_{2} \,\middle|\, \mathcal{F}_j \right\}.
\end{align*}
Notice that the desired term is just $\mathbb{P}(\mathcal{E}_z)$. To lower bound the probability of this event, we will lower bound $\mathbb{P} (\mathcal{E}_u \cap \mathcal{E}_w)$ since $\mathcal{E}_u \cap \mathcal{E}_w \subseteq \mathcal{E}_z$ if $k_0k_1 \geq k_z$. In the following, we proceed to bound $\mathbb{P}(\mathcal{E}_w | \mathcal{E}_u)$ and $\mathbb{P}(\mathcal{E}_u)$.

\subsubsection{$\mathbb{P}(\mathcal{E}_w | \mathcal{E}_u)$} 
We have the following decomposition:
\begin{align*}
    &\left\langle \mathbf{v}_0 + \mathbf{V}\mathbf{u}_{j+1}, \mathbf{\Theta}_\star\mathbf{z}_j+ \mathbf{w}_{j} \right\rangle = \underbrace{\left\langle \mathbf{v}_0 + \mathbf{V}\mathbf{u}_{j+1}, \mathbf{\Theta}_\star\mathbf{z}_j\right\rangle}_{\text{Shift RV}} + \underbrace{\left\langle \mathbf{v}_0 + \mathbf{V}\mathbf{u}_{j+1}, \mathbf{w}_{j} \right\rangle}_{\text{Zero Mean RV}}.
\end{align*}
For notational simplicity, we define $\mathbf{q}_{j+1}:=\mathbf{v}_0 + \mathbf{V}\mathbf{u}_{j+1}$. Given any $\|\mathbf{q}\|_2 =1$, the pdf of $\langle\mathbf{q},\mathbf{w}_j\rangle$ is \textit{log-concave} and symmetric around $x=0$, implying that this pdf is unimodal and monotone on each side of its mode, i.e., non-decreasing to the left and non-increasing to the right. Define $\mathcal{F}_j^* = \sigma(\mathcal{F}_j \cup \sigma(\mathbf{q}_{j+1}))$, and therefore, under $\mathcal{F}_j^*$ we have
\begin{align}
    \mathbb{P}(\mathcal{E}_w) =& \mathbb{P}( \left| \left\langle \mathbf{q}_{j+1}, \mathbf{\Theta}_\star\mathbf{z}_j+ \mathbf{w}_{j} \right\rangle \right|
    \ge k_0 \left\|\mathbf{q}_{j+1}\right\|_{2}
    \,\mid\, \mathcal{F}_j^* )
    \geq \mathbb{P}( \left| \left\langle \mathbf{q}_{j+1},  \mathbf{w}_{j} \right\rangle \right|
    \geq k_0 \left\|\mathbf{q}_{j+1}\right\|_{2}
    \,\mid\, \mathcal{F}_j^* ) , \label{eq:lwb-Ew}
\end{align}
where we lower bound the conditional probability by dropping the scalar $\langle \mathbf{q}_{j+1}, \mathbf{\Theta}_\star\mathbf{z}_j\rangle$, yielding \eqref{eq:lwb-Ew}.
\Cref{fig:lwb-Ew} illustrates how this inequality holds. Precisely, shifting the interval from the center can only increase the corresponding outside probability.
\begin{figure}
    \centering
    \includegraphics[width=0.8\linewidth]{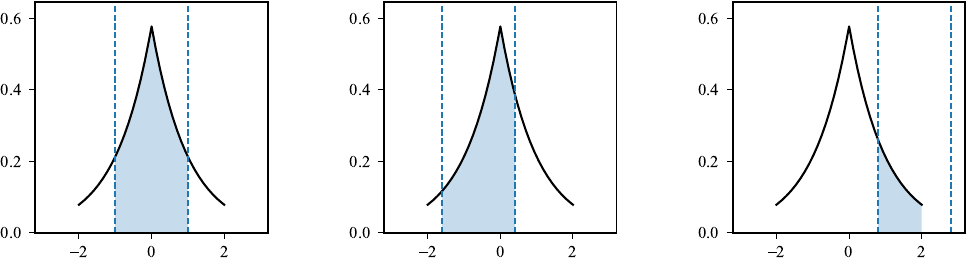}
    \caption{The shaded regions represent the probability $\mathbb{P}( \left| \left\langle \mathbf{q}_{j+1}, \mathbf{\Theta}\mathbf{z}_j+ \mathbf{w}_{j} \right\rangle \right|
    \leq k_0 \left\|\mathbf{q}_{j+1}\right\|_{2}
    \mid \mathcal{F}_j^* )$: (left) zero-centered case, yielding a symmetric and maximal in-window probability mass; (middle) a mild negative shift, decreasing the mass within the fixed window; (right) a shift toward the positive boundary, leaving only a small in-window mass.}
    \label{fig:lwb-Ew}
\end{figure}
The variance of the term $\left\langle \mathbf{q}_{j+1},  \mathbf{w}_{j} \right\rangle$ is lower bounded by $\text{Var}(\left\langle \mathbf{q}_{j+1},  \mathbf{w}_{j} \right\rangle) \geq \sigma_w^2\|\mathbf{q}_{j+1}\|_2^2$ under $\mathcal{F}_j^*$. Let $Y := \langle \mathbf{q}_{j+1}, \mathbf{w}_j \rangle$. We apply the Paley--Zygmund inequality to
$Y^2$, where for any $0<\theta<1$:
\begin{align*}
\mathbb{P}\left(Y^2 \ge \theta\mathbb{E}\big[Y^2 \mid \mathcal{F}_j^*\big]
  \middle|\, \mathcal{F}_j^* \right) \ge (1-\theta)^2 \frac{\big(\mathbb{E}[Y^2 \mid \mathcal{F}_j^*]\big)^2}{\mathbb{E}[Y^4 \mid \mathcal{F}_j^*]} .
\end{align*}
Moreover, $\mathbb{E}[Y^2 \mid \mathcal{F}_j^*] = \operatorname{Var}(Y \mid \mathcal{F}_j^*) \geq \sigma_w^2 \|\mathbf{q}_{j+1}\|_2^2 $.

Since $Y$ is one-dimensional symmetric log-concave, there exists
a universal constant $C>0$ such that $\mathbb{E}[Y^4 \mid \mathcal{F}_j^*]
\le
C \big(\mathbb{E}[Y^2 \mid \mathcal{F}_j^*]\big)^2$. Substituting this into the Paley--Zygmund inequality yields
\[\mathbb{P}\left(
  Y^2 \ge \theta\sigma_w^2 \|\mathbf{q}_{j+1}\|_2^2
  \middle| \mathcal{F}_j^*
\right)
\ge \mathbb{P}\left(Y^2 \ge \theta\mathbb{E}\big[Y^2 \mid \mathcal{F}_j^*\big]
  \middle|\, \mathcal{F}_j^* \right)\ge
(1-\theta)^2/C.\]
The left-hand side probability term can be written as $\mathbb{P}\left(|Y| \ge \sqrt{\theta}\sigma_w \|\mathbf{q}_{j+1}\|_2\,\middle| \mathcal{F}_j^* \right)$. Choosing the constant
$\theta = k_0^2 / \sigma_w^2 \in (0,1)$ and $k_0 \in (0, \sigma_w)$, we obtain $\mathbb{P}\!\left(
  |\langle \mathbf{q}_{j+1}, \mathbf{w}_j\rangle|
  \ge k_0 \|\mathbf{q}_{j+1}\|_2
  \middle|\, \mathcal{F}_j^*
\right)
\ge
(1 - k_0^2 / \sigma_w^2)^2/C
=: p_w$, which provides a constant lower bound for the probability. Corollary 4.1 in \cite{marsiglietti2025logconcave-consen} shows that under our settings of log-concavity and symmetry with 0, $C=6$. Therefore, we have
\begin{align}
    \mathbb{P}\!\left(
  |\langle \mathbf{q}_{j+1}, \mathbf{w}_j\rangle|
  \ge k_0 \|\mathbf{q}_{j+1}\|_2
  \middle|\, \mathcal{F}_j^*
\right) \ge \big(1 - k_0^2 / \sigma_w^2\big)^2 /6,\label{prop1-ineq}
\end{align}
with $k_0 < \sigma_w$, and $p_w =: (1 - k_0^2 / \sigma_w^2)^2/6$.

We consider the probability $\mathbb{P}(\mathcal{E}_w | \mathcal{E}_u,\mathcal{F}_j)$. This term can be lower bounded as follows
\[\mathbb{P}(\mathcal{E}_w | \mathcal{E}_u,\mathcal{F}_j) 
\overset{(i)}{=} \mathbb{E}[\mathbb{P}(\mathcal{E}_w | \mathcal{E}_u, \mathcal{F}_j^\ast) | \mathcal{E}_u,\mathcal{F}_j]
\overset{(ii)}{=} \mathbb{E}[\mathbb{P}(\mathcal{E}_w |  \mathcal{F}_j^\ast) | \mathcal{E}_u,\mathcal{F}_j]
\overset{(iii)}{\geq}  \mathbb{E}[p_w | \mathcal{E}_u,\mathcal{F}_j]= \big(1 - k_0^2 / \sigma_w^2\big)^2/ 6,\]
because $(i)$: the tower rule; $(ii)$: $\mathcal{E}_u\in\mathcal{F}_j^\ast$ since $\mathcal{E}_u$ is a function of $\mathbf{q}_{j+1}$ only; $(iii)$: the uniform bound from \eqref{prop1-ineq} under $\mathcal{F}_j^\ast$ for every $\mathbf{q}_{j+1}$.

\subsubsection{$\mathbb{P}(\mathcal{E}_u)$} 
We prove a uniform lower bound for $\mathcal{E}_u=\left\{\|\mathbf{q}_{j+1}\|_2\ge k_1\|\mathbf v\|_2\right\}$. Conditioning on $\mathcal F_j$, the random input $\mathbf{u}_{j+1}$ is independent of
$\mathcal F_j$, has mean zero, satisfies
$\mathbb E[\mathbf{u}_{j+1} \mathbf{u}_{j+1}^\top]=\sigma_u^2 \mathbf{I}_{m}$, and
$\|\mathbf{u}_{j+1}\|_\infty\le u_{\max}$ almost surely.

Let $X:=\|\mathbf{q}_{j+1}\|_2^2$. Using $\mathbb E[\mathbf{u}_{j+1}]=0$, we have $\mathbb E[X\mid \mathcal F_j]
=
\|\mathbf{v}_0\|_2^2
+
\mathbb E[\mathbf{u}_{j+1}^\top \mathbf{V}^\top \mathbf{V} \mathbf{u}_{j+1}]
=
\|\mathbf{v}_0\|_2^2+\sigma_u^2\|\mathbf{V}\|_F^2$. Therefore, $\mathbb E[X\mid \mathcal F_j]
\ge
\min\{1,\sigma_u^2\} \bigl(\|\mathbf{v}_0\|_2^2+\|\mathbf{V}\|_F^2\bigr)
=
\min\{1,\sigma_u^2\}$ holds. Moreover, since $\|\mathbf{u}_{j+1}\|_2^2\le m u_{\max}^2$, by Cauchy's inequality, we have 
\[\|\mathbf{v}_0+ \mathbf{V} \mathbf{u}_{j+1}\|_2^2
\le
\left(\|\mathbf{v}_0\|_2+\|\mathbf{V}\|_F\|\mathbf{u}_{j+1}\|_2\right)^2
\le
1+m u_{\max}^2.\] 
Then $0\le X\le 1+m u_{\max}^2$ almost surely, and hence $\mathbb E[X^2\mid\mathcal F_j]
\le
(1+m u_{\max}^2) \mathbb E[X\mid\mathcal F_j]$. Applying the Paley--Zygmund inequality to $X$, for any $\theta\in(0,1)$, the lower bound $\mathbb P\left(
X\ge \theta \mathbb E[X\mid\mathcal F_j]\mid\mathcal F_j
\right)
\ge
(1-\theta)^2
\frac{(\mathbb E[X\mid\mathcal F_j])^2}
{\mathbb E[X^2\mid\mathcal F_j]} $ is satisfied.

Using the above upper bound on $\mathbb E[X^2\mid\mathcal F_j]$, we obtain $\mathbb P\left(
X\ge \theta \mathbb E[X\mid\mathcal F_j]\mid\mathcal F_j
\right)
\ge
(1-\theta)^2\frac{\min\{1,\sigma_u^2\}}{1+m u_{\max}^2}$. Since $X\ge \theta \mathbb E[X\mid\mathcal F_j]
 \Longrightarrow 
X\ge \theta \min\{1,\sigma_u^2\}$, we have $\mathbb P(
\|\mathbf{v}_0+ \mathbf{V} \mathbf{u}_{j+1}\|_2
\ge
\sqrt{\theta \min\{1,\sigma_u^2\}}
\mid \mathcal F_j
)
\ge
(1-\theta)^2\frac{\min\{1,\sigma_u^2\}}{1+m u_{\max}^2}$. Choosing $\theta=1/2$, we obtain the lower bound $\mathbb P(\mathcal{E}_u\mid\mathcal F_j)
\ge
\frac{\min\{1,\sigma_u^2\}}{4 (1+m u_{\max}^2)}
=:p_u$, with a conservative $k_1:= \frac{\sqrt{\min\{1,\sigma_u^2\}}}{2},\ p_u:=\frac{\min\{1,\sigma_u^2\}}{4(1+m u_{\max}^2)} $.

\subsubsection{Obtain BMSB condition} Now, we combine the above results to get the BMSB condition: 
    $\mathbb{P}(\mathcal{E}_z | \mathcal{F}_j) \geq \frac{3\min\{1,\sigma_u^2\}}{128(1+m u_{\max}^2)}
 \coloneq p_z$, and thus $\{\mathbf{z}_t\}_{t\ge 0}$ satisfies $(1,k_z^2 \mathbf{I}_{n+nm},p_z)$-BMSB with $k_z=k_0k_1$, and $k_0 =  \sigma_w/2, k_1 = \sqrt{\min\{1,\sigma_u^2\}}/2$.
\end{proof}

\subsection{\Cref{lem:E2c}: Bound on $\mathbb P(E_2^c)$}
\label{AppA}

\begin{lemma}[Bound on $\mathbb P(E_2^c)$]\label{lem:E2c}
Let $n_z:=n+nm$ and $a_1:=k_zp_z/4$. Under \Cref{ass:input,ass:noise} and \Cref{lem:PMS,lem:BMSB}, for any $M>0$ and any $\kappa\in\mathbb N_+$, define
$\varepsilon_z(M):=\min\{a_1^2/(2M^2),1/4\}$. Then
\begin{align*}
\mathbb P(E_2^c)
\le
\frac{T}{\kappa} v_{\varepsilon_z,n_z}(M)
\exp\!\left(-\frac{\kappa p_z^2}{8}\right)
+
\frac{C_zT^{r+1}}{M^2},
\end{align*}
where $v_{\varepsilon_z,n_z}(M)$ is the cardinality of an $\varepsilon_z(M)$-net of the unit sphere in $\mathbb R^{n_z}$, and can be upper bounded by
\begin{align*}
v_{\varepsilon_z,n_z}(M)
\le
544\,n_z^{2.5}\log\!\left(\frac{2 n_z}{\varepsilon_z(M)}\right)
\left(\frac{2}{\varepsilon_z(M)}\right)^{n_z}.
\end{align*}
In particular, $\log v_{\epsilon_z,n_z}(M)
\le
\log 544 + 5n_z\log\left(\frac{n_z}{\epsilon_z(M)}\right)$.
The constant $C_z<\infty$ depends only on $c_{\rm PMS},r,m,u_{\max}$.
\end{lemma}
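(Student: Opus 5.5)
Fix a block index $k$ and write $\mathbf{Z}_k:=\frac1\kappa\sum_{t=1}^{\kappa}\mathbf z_{k\kappa+t}\mathbf z_{k\kappa+t}^\top$. The plan is to establish a fixed-direction small-ball lower bound for $\mathbf{Z}_k$ from the BMSB property, extend it uniformly over the unit sphere with an $\varepsilon_z(M)$-net, and absorb the error of the net into a truncation event controlled by \Cref{lem:PMS}. We then take a union bound over the at most $T/\kappa$ blocks.

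\emph{Step 1 (fixed direction).} Fix a unit $\mathbf v\in\mathbb R^{n_z}$. By \Cref{lem:BMSB}, the indicators $\chi_t:=\mathbf 1\{|\langle\mathbf v,\mathbf z_t\rangle|\ge k_z\}$ satisfy $\mathbb E[\chi_{t}\mid\mathcal F_{t-1}]\ge p_z$. Hence $\sum_{t}(\chi_t-\mathbb E[\chi_t\mid\mathcal F_{t-1}])$ is a martingale with bounded increments. Azuma--Hoeffding then gives
\[
\mathbb P\Big(\textstyle\sum_{t=1}^{\kappa}\chi_{k\kappa+t}\le \kappa p_z/2\Big)\le\exp(-\kappa p_z^2/8).
\]
On the complementary event,
\[
\mathbf v^\top\mathbf Z_k\mathbf v\ge \frac1\kappa\cdot\frac{\kappa p_z}{2}k_z^2=\frac{k_z^2p_z}{2}\ge 2a_1^2 ,
\]
using $p_z\le 1$ and $2a_1^2=k_z^2p_z^2/8$. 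This is the standard Simchowitz-et-al.\ small-ball argument.

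\emph{Step 2 (net and truncation).} Let $\mathcal N$ be an $\varepsilon_z$-net of the sphere. On the event $\{\|\mathbf Z_k\|_2\le M^2\}$, for any unit $\mathbf u$ we choose $\mathbf v\in\mathcal N$ with $\|\mathbf u-\mathbf v\|\le\varepsilon_z$. Then
\[
\mathbf u^\top\mathbf Z_k\mathbf u\ge \mathbf v^\top\mathbf Z_k\mathbf v-2\varepsilon_z\|\mathbf Z_k\|\ge 2a_1^2-2\varepsilon_z M^2 .
\]
With $\varepsilon_z\le a_1^2/(2M^2)$ this is at least $a_1^2$, which is exactly the event inside $\mathcal E_2$. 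I expect the exact constants in this step to need slight adjustment, for example using $(\mathbf u-\mathbf v)^\top \mathbf Z_k(\mathbf u+\mathbf v)$ and the fact that $\varepsilon_z\le 1/4$. A union bound over $\mathcal N$ and over the blocks gives the first term $\frac{T}{\kappa}v_{\varepsilon_z,n_z}e^{-\kappa p_z^2/8}$.

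\emph{Step 3 (truncation probability).} Control all blocks simultaneously through the single event $\{\sum_{t<T}\|\mathbf z_t\|_2^2> M^2\}$, since $\|\mathbf Z_k\|\le\sum_t\|\mathbf z_t\|^2$ up to the factor $1/\kappa\le 1$. Markov's inequality bounds its probability by $M^{-2}\sum_{t<T}\mathbb E\|\mathbf z_t\|^2$. We have $\|\mathbf z_t\|^2=\|\mathbf x_t\|^2(1+\|\mathbf u_t\|^2)\le(1+mu_{\max}^2)\|\mathbf x_t\|^2$, and \Cref{lem:PMS} gives $\mathbb E\|\mathbf x_t\|^2\le c^{\rm PMS}(1+t^r)$. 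Summing over $t<T$ yields at most $C_zT^{r+1}$ with $C_z=2c^{\rm PMS}(1+mu_{\max}^2)$.

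\emph{Step 4 (net cardinality).} The covering bound follows from a standard sphere-covering estimate such as \Cref{lem:sphere-cover}, the Rogers-type bound $n^{2.5}\log(\cdot)(2/\varepsilon)^n$ style. Taking logs, and using $\log(n_z^{2.5}\log(2n_z/\varepsilon))+n_z\log(2/\varepsilon)\le 5n_z\log(n_z/\varepsilon)$ for $\varepsilon\le 1/4$, gives the stated logarithmic form.

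The main obstacle is Step 1 combined with the filtration bookkeeping. We must verify that $\mathbf z_{t}$ is measurable with respect to the filtration used in \Cref{lem:BMSB}, indexed so that the BMSB conditional bound applies one step ahead, so that Azuma applies within each block. A secondary concern is the net-approximation constants in Step 2.
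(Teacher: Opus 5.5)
Your proposal is correct and is essentially the paper's argument: Markov's inequality with \Cref{lem:PMS} for a truncation event, the BMSB small-ball bound for each direction of an $\varepsilon_z(M)$-net, and the identity $\mathbf u^\top\mathbf K_k\mathbf u=\mathbf v^\top\mathbf K_k\mathbf v+(\mathbf u-\mathbf v)^\top\mathbf K_k(\mathbf u+\mathbf v)$ to pass from the net to the sphere. That last step works exactly as you wrote it, since $\varepsilon_z(M)\le a_1^2/(2M^2)$ always, followed by union bounds over the net and the at most $T/\kappa$ blocks. The only differences are cosmetic: you rederive \Cref{lem:Simchowitz} via Azuma on the indicators instead of citing it, and you truncate on $\sum_t\|\mathbf z_t\|_2^2\le M^2$ rather than $\max_t\|\mathbf z_t\|_2\le M$, which gives the same Markov bound $C_zT^{r+1}/M^2$.
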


To formally prove \Cref{lem:E2c}, we first introduce the following useful lemmas.

\begin{lemma}[Finite covering of unit ball, Thm. D.1 in \cite{Li2024SME-LTI}]\label{lem:finite-cover}
Let $v_{\varepsilon,n}$ be the minimal number of closed Euclidean balls of radius $\varepsilon$ needed to cover $\overline{B}_n(0,1)$.
For every integer $n\ge 1$ and every $0<\varepsilon<0.5$, we have $v_{\varepsilon,n}
\le 544\, n^{2.5}\log\!\left({n}/{\varepsilon}\right)\left({1}/{\varepsilon}\right)^n,$ and the centers can be chosen inside $\overline{B}_n(0,1)$.
\end{lemma}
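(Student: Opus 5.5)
The plan is to rescale and reduce to a covering-density estimate of Rogers type; a naive volumetric argument will not suffice. The standard volumetric bound (maximal $\varepsilon$-separated set, disjoint $\varepsilon/2$-balls inside $\overline{B}_n(0,1+\varepsilon/2)$) only gives $(1+2/\varepsilon)^n\le(3/\varepsilon)^n$. This loses an exponential factor $3^n$ relative to the claimed $(1/\varepsilon)^n$. I will therefore use a two-stage construction in the spirit of Rogers' covering theorem, which loses only polynomial factors.

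\emph{Stage 1 (random covering).} Set $\varepsilon':=\varepsilon(1-1/n)$, so that $(\varepsilon')^n\ge \varepsilon^n/4$ for $n\ge 2$, while $n=1$ is trivial. Draw $N$ i.i.d.\ centers $c_1,\dots,c_N$ uniformly from $\overline{B}_n(0,1)$ and consider the balls $B(c_i,\varepsilon')$. For a fixed $x\in\overline{B}_n(0,1)$, the probability that $x$ is not covered is $(1-p_x)^N$, where
\[
p_x=\frac{\mathrm{vol}\bigl(B(x,\varepsilon')\cap \overline{B}_n(0,1)\bigr)}{\mathrm{vol}(\overline{B}_n(0,1))}.
\]
The key geometric input is a lower bound $\mathrm{vol}(B(x,\varepsilon')\cap\overline{B}_n(0,1))\ge c\,n^{-1/2}\mathrm{vol}(B(0,\varepsilon'))$ for every $x$ in the unit ball when $\varepsilon<1/2$. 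I would prove it by noting that the worst case is $x$ on the boundary sphere. In that case the intersection contains the cap of $B(x,\varepsilon')$ cut by a hyperplane through $x$ shifted inward by $\varepsilon'^2/2$. A slab of half-width $O(\varepsilon'/\sqrt n)$ around a central hyperplane carries a constant fraction of the ball's volume, which gives the $n^{-1/2}$ loss. Hence $p_x\ge c\,\varepsilon^n/(4\sqrt n)$. Choosing $N\approx C\sqrt n\,\varepsilon^{-n}\cdot n\log(n/\varepsilon)$ makes the expected uncovered fraction of the ball at most $(\varepsilon/n)^{n}$, and I fix a realization achieving this.

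\emph{Stage 2 (patching the holes).} Let $U$ be the set of points of $\overline{B}_n(0,1)$ not covered by the Stage-1 balls. Take a maximal $(\varepsilon/n)$-separated subset $\{y_j\}$ of $U$ and add the balls $B(y_j,\varepsilon)$. These cover $U$, and the enlarged Stage-1 balls $B(c_i,\varepsilon)\supset B(c_i,\varepsilon')$ cover the rest. To count the $y_j$: the balls $B(y_j,\varepsilon/(2n))$ are disjoint. Each such ball lies in the set of points uncovered by the shrunken balls $B(c_i,\varepsilon'-\varepsilon/(2n))$, intersected with a slight enlargement of the unit ball. Rerunning the Stage-1 expectation with this slightly smaller radius, which costs only a factor $e^{O(1)}$ since the radius ratio is $1-O(1/n)$, bounds the volume of that set. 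Dividing by $\mathrm{vol}(B(0,\varepsilon/(2n)))$ gives at most $(\varepsilon/n)^n(2n/\varepsilon)^n\cdot O(1)=O(2^n)$ additional balls, which is absorbed since $2^n\le\varepsilon^{-n}$ for $\varepsilon<1/2$. Summing the two stages gives $v_{\varepsilon,n}\le C' n^{1.5}\cdot n\log(n/\varepsilon)\,\varepsilon^{-n}$, i.e.\ the $n^{2.5}\log(n/\varepsilon)\varepsilon^{-n}$ form. Explicit tracking of $c$, the factor $4$, and the Stage-2 constant should give $544$. Alternatively, this is exactly the bound one extracts from Verger-Gaugry's refinement of Rogers for covering a ball of radius $R=1/\varepsilon\ge 2$ by unit balls.

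\emph{Centers inside the ball.} Given any cover, replace each center $c$ by its Euclidean projection $P(c)$ onto the convex set $\overline{B}_n(0,1)$. For $x$ in the unit ball, the projection inequality gives $\|x-P(c)\|_2\le\|x-c\|_2\le\varepsilon$. So the projected balls still cover, with the same cardinality.

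The main obstacle is Stage 1's boundary estimate together with the constant bookkeeping: obtaining a uniform $\Omega(n^{-1/2})$ fraction for balls centered near the sphere, and then keeping all numerical losses small enough to land at $544$. If the direct constant chase proves too messy, I would fall back on citing the Rogers/Verger-Gaugry covering-number theorem and only verify the rescaling and projection steps, as the paper does by referencing Thm.~D.1 of \cite{Li2024SME-LTI}.
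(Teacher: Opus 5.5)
\paragraph{Gap in Stage 1.} Your self-contained argument fails at Stage 1, because the ``key geometric input'' is false for the range of $\varepsilon$ the lemma covers.

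\begin{itemize}
\item For $\|x\|_2=1$, the radical hyperplane of $\partial B(x,\varepsilon')$ and the unit sphere lies at depth $\varepsilon'^2/2$ from $x$. In units of the radius $\varepsilon'$ this is a relative depth of $\varepsilon'/2$, not $O(1/\sqrt n)$.
\item Both pieces of the lens $B(x,\varepsilon')\cap\overline{B}_n(0,1)$ are minor caps over a common base of radius $h=\varepsilon'\sqrt{1-\varepsilon'^2/4}$. A minor cap lies in the ball of its base radius about its base center, so the lens sits inside a ball of radius $h$ and
\[
\frac{\mathrm{vol}\bigl(B(x,\varepsilon')\cap\overline{B}_n(0,1)\bigr)}{\mathrm{vol}\bigl(B(0,\varepsilon')\bigr)}\le\Bigl(1-\frac{\varepsilon'^2}{4}\Bigr)^{n/2}.
\]
For fixed $\varepsilon$ (say $\varepsilon=1/4$) this is exponentially small, far below $c\,n^{-1/2}$. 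Your slab heuristic is valid only when $\varepsilon'=O(\sqrt{\log n/n})$.
\item The bad set is not negligible. Essentially the same bound holds for every $x$ with $\|x\|_2\ge 1-1/n$, and that shell carries at least a $1-1/e$ fraction of the volume.
\end{itemize}

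Consequently, for fixed $\varepsilon$ and large $n$, with $N=\mathrm{poly}(n)\,\varepsilon^{-n}$ uniform centers you get $Np_x\to 0$ on that shell. The expected uncovered fraction then stays bounded below by a constant, and Stage~2 needs on the order of $(n/\varepsilon)^n$ patches. Rescuing uniform sampling would mean inflating $N$ by $(1-\varepsilon^2/4)^{-n/2}$, which destroys the $\varepsilon^{-n}$ rate.

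Handling this boundary layer is precisely the nontrivial content of the Rogers/Verger-Gaugry-type covering results you mention. The paper gives no proof and simply cites Thm.~D.1 of \cite{Li2024SME-LTI}, so only your fallback matches what the paper does.

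\paragraph{A repair within your framework.} Draw the centers uniformly from $\overline{B}_n(0,s)$ with $s:=\sqrt{1-\varepsilon'^2}$ instead of from the unit ball.

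\begin{itemize}
\item For $\|x\|_2=1$, the spheres $\partial B(0,s)$ and $\partial B(x,\varepsilon')$ now meet orthogonally along an $(n-2)$-sphere of radius $s\varepsilon'$.
\item The lens contains the cone over that base with apex $(1-\varepsilon')x$ and height $\varepsilon'(1-\varepsilon')$.
\item Dividing by $\mathrm{vol}(\overline{B}_n(0,s))=s^n\,\mathrm{vol}(\overline{B}_n(0,1))$ gives $p_x\gtrsim\varepsilon'^n/\sqrt n$.
\item As $\|x\|_2$ decreases to $\sqrt{s^2-\varepsilon'^2}$, the base radius and cone height only grow. Below that radius, at least half of $B(x,\varepsilon')$ lies in $\overline{B}_n(0,s)$. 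So the bound is uniform over the ball.
\end{itemize}

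Stage~2 also needs two fixes.

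\begin{itemize}
\item Shrinking the radius by a factor $1-O(1/n)$ multiplies $p_x$ by a constant $\alpha<1$. That constant lands in the exponent, $(1-\alpha p_x)^N\approx(\varepsilon/n)^{\alpha n}$, not in front as a factor $e^{O(1)}$. You must therefore calibrate $N$ from the start against the shrunken radius on the enlarged ball $\overline{B}_n(0,1+\varepsilon/(2n))$. This costs only a constant factor.
\item Nothing in your sketch actually produces the constant $544$; it is available only through the citation.
\end{itemize}
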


\begin{lemma}[Finite covering of unit spheres]
\label{lem:sphere-cover}
Let \(v_{\epsilon,d}^{\mathsf{sph}}\) be the minimal number of closed Euclidean balls
of radius \(\epsilon\), with centers on the unit sphere
\(\mathbb S^{d-1}:=\{x\in\mathbb R^d:\|x\|_2=1\}\), needed to cover
\(\mathbb S^{d-1}\). Then, for every integer \(d\ge 1\) and every
\(0<\epsilon\le 1/4\), it satisfies $v_{\epsilon,d}^{\mathsf{sph}}
\le
544 d^{2.5}
\log\left(\frac{2d}{\epsilon}\right)
\left(\frac{2}{\epsilon}\right)^d$. Moreover, $\log v_{\epsilon,d}^{\mathsf{sph}}
\le
\log 544 + 5d\log\left(\frac{d}{\epsilon}\right)$.
\end{lemma}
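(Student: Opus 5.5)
The plan is to derive the sphere covering from the ball covering in \Cref{lem:finite-cover}, applied at radius $\epsilon/2$, and then move the centers onto the sphere.

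\emph{Covering step.} Since $0<\epsilon\le 1/4$, we have $\epsilon/2\le 1/8<0.5$, so \Cref{lem:finite-cover} applies. It gives a cover of $\overline{B}_d(0,1)$, hence of $\mathbb S^{d-1}$, by at most
\[
N:=544\,d^{2.5}\log(2d/\epsilon)\,(2/\epsilon)^d
\]
closed balls of radius $\epsilon/2$ with centers $c_1,\dots,c_N$. Discard every ball that does not meet $\mathbb S^{d-1}$. For each remaining ball, pick any point $s_i\in\mathbb S^{d-1}\cap \overline B(c_i,\epsilon/2)$. By the triangle inequality, every $x\in\mathbb S^{d-1}\cap\overline B(c_i,\epsilon/2)$ satisfies $\|x-s_i\|_2\le \|x-c_i\|_2+\|c_i-s_i\|_2\le\epsilon$. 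Therefore the balls $\overline B(s_i,\epsilon)$ have centers on the sphere and cover $\mathbb S^{d-1}$, so $v^{\mathsf{sph}}_{\epsilon,d}\le N$. This is the stated bound.

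\emph{Logarithmic bound.} Taking logs,
\[
\log v^{\mathsf{sph}}_{\epsilon,d}\le \log 544+2.5\log d+\log\log(2d/\epsilon)+d\log(2/\epsilon).
\]
Put $L:=\log(d/\epsilon)$. Since $d\ge1$ and $\epsilon\le1/4$, we have $L\ge\log 4>1.38$. I will absorb each term into a multiple of $dL$:
\begin{itemize}
\item $2.5\log d\le 2.5\,dL$, because $\log d\le L$ and $d\ge 1$.
\item $d\log(2/\epsilon)=d(\log 2+\log(1/\epsilon))\le d(\log 2+L)$, and $\log 2\le L/2$ because $L\ge\log 4$. So this term is at most $1.5\,dL$.
\item $\log\log(2d/\epsilon)=\log(\log 2+L)\le \log(1.5L)\le 1.5L-1\le dL$, using $\log y\le y-1$.
\end{itemize}
Summing gives at most $2.5\,dL+1.5\,dL+dL=5\,dL=5d\log(d/\epsilon)$, which is the second claim.

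The only point needing care is the middle term in the log-absorption. The statement's $5d\log(d/\epsilon)$ is somewhat generous, so I expect any reasonable split of the constants to work.
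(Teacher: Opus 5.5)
Your covering step is the paper's own argument. You apply \Cref{lem:finite-cover} at radius $\epsilon/2$, discard the balls that miss $\mathbb S^{d-1}$, recenter each remaining ball at a point of its intersection with the sphere, and conclude with the triangle inequality. The paper only asserts the logarithmic bound, so your term-by-term absorption adds something. Your first two bullets are correct.

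The third bullet, however, contains a false step. The inequality $1.5L-1\le dL$ fails when $d=1$ and $L>2$. For $d=1$ we have $L=\log(1/\epsilon)$, which is unbounded as $\epsilon\to 0$, so your chain breaks for every $\epsilon<e^{-2}$ in the allowed range $(0,1/4]$. The conclusion of that bullet, $\log\log(2d/\epsilon)\le dL$, is still true, and the repair is one line. Apply $\log y\le y-1$ directly to $y=\log 2+L$ instead of to $1.5L$:
\[
\log\log(2d/\epsilon)=\log(\log 2+L)\le \log 2+L-1<L\le dL .
\]
With this replacement the three bullets again sum to at most $2.5\,dL+1.5\,dL+dL=5\,d\log(d/\epsilon)$, and the proof is complete.
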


\begin{proof}[Proof of \Cref{lem:sphere-cover}]
Apply the unit-ball covering result of \Cref{lem:finite-cover}
with radius \(\epsilon/2\). Thus the closed unit ball in \(\mathbb R^d\)
can be covered by at most $544 d^{2.5}\log\left(\frac{2d}{\epsilon}\right)\left(\frac{2}{\epsilon}\right)^d$ closed balls of radius \(\epsilon/2\). For each covering ball that intersects
\(\mathbb S^{d-1}\), choose one point from the intersection as its center and
discard all covering balls that do not intersect \(\mathbb S^{d-1}\). Then, for
any \(x\in \mathbb S^{d-1}\), \(x\) lies in one of the original radius-\(\epsilon/2\)
balls, and the selected point from the same ball is also on \(\mathbb S^{d-1}\).
Hence their distance is at most \(\epsilon\). This gives the required covering bound. For the logarithmic bound, since \(0<\epsilon\le 1/4\) and \(d\ge 1\), it satisfies $\log\left(
544 d^{2.5}
\log\left(2d/\epsilon\right)
\left(2/\epsilon\right)^d
\right)
\le
\log 544 + 5d\log\left(\frac d\epsilon\right)$.
\end{proof}

\begin{lemma}[Small-ball lower tail, Prop. 2.5 in \cite{Simchowitz2018LearningWithoutMixing}]\label{lem:Simchowitz}
Let $\{Z_t\}_{t\ge 1}$ be a real-valued process adapted to $\{\mathcal{F}_t\}_{t\ge 1}$.
If $\{Z_t\}_{t\ge 1}$ satisfies the $(1,k,p)$-BMSB condition, then for every integer $T\ge 1$, the inequality $\mathbb{P}(\sum\nolimits_{t=1}^T Z_t^2 \le \frac{k^2p^2}{8}\,T)
\le \exp(-\frac{Tp^2}{8})$ holds.
\end{lemma}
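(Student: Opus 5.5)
The plan is to reduce the quadratic lower tail to a lower tail for a sum of indicators, and then apply a martingale concentration inequality. I read the $(1,k,p)$-BMSB condition for the scalar process as the one-step statement $\mathbb{P}(|Z_{t+1}|\ge k\mid\mathcal{F}_t)\ge p$ for all $t$. This means taking $\Gamma_{\mathrm{sb}}=k^2$ in \Cref{def:BMSB}, which is the normalization consistent with the threshold $k^2p^2T/8$. For the first term I use the same bound with the trivial $\sigma$-algebra $\mathcal{F}_0$ (or equivalently shift the index by one, which does not affect the argument).

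First, define the indicators $B_t:=\mathbf{1}\{|Z_t|\ge k\}$, which are $\mathcal{F}_t$-measurable and take values in $\{0,1\}$. Pointwise $Z_t^2\ge k^2 B_t$, so
\begin{align*}
\Big\{\sum\nolimits_{t=1}^T Z_t^2\le \tfrac{k^2p^2}{8}T\Big\}\subseteq\Big\{\sum\nolimits_{t=1}^T B_t\le \tfrac{p^2}{8}T\Big\}\subseteq\Big\{\sum\nolimits_{t=1}^T B_t\le \tfrac{p}{2}T\Big\},
\end{align*}
where the last inclusion uses $p\le 1$. It therefore suffices to bound the probability that the indicator count falls below half its guaranteed conditional mean.

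Second, set $D_t:=B_t-\mathbb{E}[B_t\mid\mathcal{F}_{t-1}]$. This is a martingale difference sequence with respect to $\{\mathcal{F}_t\}$, and each $D_t$ lies in an interval of length $1$. The BMSB condition gives $\mathbb{E}[B_t\mid\mathcal{F}_{t-1}]\ge p$, so $\sum_t B_t\le pT/2$ implies $\sum_t D_t\le -pT/2$. The Azuma--Hoeffding inequality for bounded martingale differences then gives
\begin{align*}
\mathbb{P}\Big(\sum\nolimits_{t=1}^T D_t\le -\tfrac{pT}{2}\Big)\le \exp\!\Big(-\tfrac{2(pT/2)^2}{T}\Big)=\exp\!\Big(-\tfrac{p^2T}{2}\Big)\le\exp\!\Big(-\tfrac{p^2T}{8}\Big),
\end{align*}
which is the claim.

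There is no serious obstacle; the argument is loose by constant factors, which is consistent with the $1/8$ constants in the statement. The step needing most care is the bookkeeping of the filtration. The conditional lower bound on $\mathbb{P}(|Z_t|\ge k\mid\mathcal{F}_{t-1})$ must hold for every $t=1,\dots,T$, including $t=1$, so that the centered indicators form a genuine martingale difference sequence adapted to the same filtration. This is where I would check that the indexing in \Cref{def:BMSB} (with $t\ge 1$) matches, prepending $\mathcal{F}_0$ if necessary.
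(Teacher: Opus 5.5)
Your argument is correct and is essentially the proof behind the cited Prop.~2.5 of Simchowitz et al.\ (lower-bound $Z_t^2$ by $k^2$ times an indicator, then apply Azuma--Hoeffding to the centered indicators). Their version handles general block length with an extra reverse-Markov step: a block's indicator average exceeds $p/2$ with conditional probability at least $p/2$, followed by Azuma at deviation $p/4$, which is where both $p^2/8$ constants come from. With block length $1$ you can skip that step, and you get the stronger bound $e^{-p^2T/2}$.

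Your caution about $\mathcal{F}_0$ is warranted and is more than bookkeeping. With the paper's indexing $t\ge 1$ in \Cref{def:BMSB}, $Z_1$ is unconstrained, and the literal statement fails (take $T=1$, $Z_1\equiv 0$). So one genuinely needs the source's convention that the condition also holds for $Z_1$ given $\mathcal{F}_0$; an index shift would bound $\sum_{t=2}^{T+1} Z_t^2$ rather than $\sum_{t=1}^{T} Z_t^2$.
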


\begin{proof}[Proof of \Cref{lem:E2c}]
Recall that $\mathcal{E}_2$ is defined in \eqref{eq:E2_eq}
and $\mathcal{E}_{2,M}:=\{\max_{1\le t\le T}\|\mathbf{z}_t\|_2\le M\}$. Since $\mathbf{z}_t=[\mathbf{x}_t^\top,(\mathbf{u}_t\otimes \mathbf{x}_t)^\top]^\top$, we have
$\|\mathbf{z}_t\|_2^2=(1+\|\mathbf{u}_t\|_2^2)\|\mathbf{x}_t\|_2^2\le (1+mu_{\max}^2)\|\mathbf{x}_t\|_2^2$. By \Cref{lem:PMS},
$\mathbb E\|\mathbf{x}_t\|_2^2\le c_{\rm PMS}(1+t^r)$, hence,
$\mathbb E\|\mathbf{z}_t\|_2^2\le C_z(1+t^r)$. Therefore Markov's inequality and a union bound give
\begin{align*}
\mathbb P(\mathcal{E}_{2,M}^c)
\le
\sum\nolimits_{t=1}^{T}\frac{\mathbb E\|\mathbf{z}_t\|_2^2}{M^2}
\le
\frac{C_zT^{r+1}}{M^2}.
\end{align*}

It remains to bound $\mathbb P(\mathcal{E}_2^c\cap \mathcal{E}_{2,M})$. Fix a block index $k$, and write $\mathbf{K}_k:=1/ \kappa \sum\nolimits_{t=1}^{\kappa}
\mathbf{z}_{k\kappa+t}\mathbf{z}_{k\kappa+t}^{\top}$. Let $\mathcal V=\{\lambda_1,\ldots,\lambda_{v_{\varepsilon_z,n_z}}\}$ be an $\varepsilon_z(M)$-net of the unit sphere in $\mathbb R^{n_z}$. The bound for $v_{\varepsilon_z,n_z}$ follows from \Cref{lem:sphere-cover} applied to dimension $n_z$.

For any fixed unit vector $\lambda_i \in \mathcal{V}$, \Cref{lem:BMSB} implies that the scalar process $\lambda_i^\top \mathbf{z}_t$ satisfies the $(1,k_z,p_z)$-BMSB condition. Applying \Cref{lem:Simchowitz} on the block $k\kappa+1,\ldots,k\kappa+\kappa$, conditionally on $\mathcal F_{k\kappa}$, yields
\begin{align*}
\mathbb P\left(
\lambda_i^\top \mathbf{K}_k\lambda_i
\le
\frac{k_z^2p_z^2}{8}
\,\middle|\,
\mathcal F_{k\kappa}
\right)
\le
\exp\!\left(-\frac{\kappa p_z^2}{8}\right).
\end{align*}
Since $k_z^2p_z^2/8=2a_1^2$, a union bound over the net $\mathcal{V}$ gives
\begin{align*}
\mathbb P\left(
\exists i\in[v_{\varepsilon_z,n_z}]:
\lambda_i^\top \mathbf{K}_k\lambda_i\le 2a_1^2
\,\middle|\,
\mathcal F_{k\kappa}
\right)
\le
v_{\varepsilon_z,n_z}(M)
\exp\!\left(-\frac{\kappa p_z^2}{8}\right).
\end{align*}

We now show that the net condition implies the full matrix lower bound on the truncation event. On $\mathcal{E}_{2,M}$, every regressor $\mathbf{z}$ in the block satisfies $\|\mathbf{z}_{k\kappa+t}\|_2\le M$, and thus $\|\mathbf{K}_k\|_2\le M^2$. Suppose that $\lambda_i^\top \mathbf{K}_k\lambda_i>2a_1^2$ for every $\lambda_i\in\mathcal V$. For any unit vector $\mathbf{v} \in \R^{n_z}$, choose $\lambda_i\in\mathcal V$ such that $\|\mathbf{v}-\lambda_i\|_2\le\varepsilon_z(M)$. Since $\mathbf{K}_k\succeq0$,
\begin{align*}
\mathbf{v}^\top \mathbf{K}_k \mathbf{v}
=
\lambda_i^\top \mathbf{K}_k\lambda_i
+
(\mathbf{v}-\lambda_i)^\top \mathbf{K}_k(\mathbf{v}+\lambda_i)
\ge
2a_1^2-2\varepsilon_z(M)M^2
\ge
a_1^2.
\end{align*}
The last inequality follows from the definition of $\varepsilon_z(M)$: if $\varepsilon_z(M)=a_1^2/(2M^2)$, it is equality; if $\varepsilon_z(M)=1/4$, then $M^2\le2a_1^2$, so $2\varepsilon_z(M)M^2\le a_1^2$. Hence $\mathbf{K}_k\succeq a_1^2 \mathbf{I}_{n_z}$.

Therefore, on $\mathcal{E}_{2,M}$, the failure of the $k$-th block PE implies that at least one net direction has small quadratic value:
\begin{align*}
\{\mathbf{K}_k\not\succeq a_1^2 \mathbf{I}_{n_z}\}\cap \mathcal{E}_{2,M}
\subseteq
\left\{
\exists i\in[v_{\varepsilon_z,n_z}]:
\lambda_i^\top \mathbf{K}_k\lambda_i\le 2a_1^2
\right\}.
\end{align*}
Consequently, $\mathbb P\left(\{\mathbf{K}_k\not\succeq a_1^2 \mathbf{I}_{n_z}\}\cap \mathcal{E}_{2,M}\right)
\le
v_{\varepsilon_z,n_z}(M)
\exp(-\kappa p_z^2 / 8)$. Finally, there are at most $T/\kappa$ full blocks appearing in the definition of $\mathcal{E}_2$. A union bound over these blocks gives $\mathbb P(\mathcal{E}_2^c\cap \mathcal{E}_{2,M})
\le
\frac{T}{\kappa}
v_{\varepsilon_z,n_z}(M)
\exp\!\left(-\frac{\kappa p_z^2}{8}\right)$. Combining this with the truncation bound,
\begin{align*}
\mathbb P(\mathcal{E}_2^c)
\le
\mathbb P(\mathcal{E}_2^c\cap \mathcal{E}_{2,M})+\mathbb P(\mathcal{E}_{2,M}^c)
\le
\frac{T}{\kappa}
v_{\varepsilon_z,n_z}(M)
\exp\!\left(-\frac{\kappa p_z^2}{8}\right)
+
\frac{C_zT^{r+1}}{M^2},
\end{align*}
which completes the proof.
\end{proof}

\subsection{\Cref{lem:E1E2}: Bound on $\mathbb{P}(\mathcal{E}_1\cap\mathcal{E}_2)$}
\label{AppB}
\begin{lemma}[Bound on $\mathbb{P}(\mathcal{E}_1\cap\mathcal{E}_2)$]
\label{lem:E1E2}
For any $M>0$ and any $\delta>0$, define $\varepsilon_\delta:=\frac{a_1\delta}{4\sqrt n},\
\bar\varepsilon_\delta:=\min\{\varepsilon_\delta,\varepsilon_0\}$ with $\varepsilon_0$ from \Cref{ass:tight-bound-w}, and $q_w(\varepsilon):=\inf_{j\in[n],\,b\in\{\pm1\}}
\mathbb P\!\left(b\mathbf{w}_t[j]\ge w_{\max} - \varepsilon\right),
\ 0<\varepsilon\le \varepsilon_0$. Then, $\mathbb{P}(\mathcal{E}_1\cap\mathcal{E}_2)
\le
v_\gamma(M)\Bigl(1-q_w(\bar\varepsilon_\delta)\Bigr)^{\lfloor (T-1)/\kappa\rfloor}
+\frac{C_z T^{r+1}}{M^2}$,
where $v_\gamma(M)$ satisfies:
\begin{align*}
v_\gamma(M)
&\le
544 (n^2+n^2 m)^{2.5}\,
\log\Big(\frac{2n^2+ 2n^2 m}{\epsilon_\gamma}\Big)\,
\Big(\frac{2}{\epsilon_\gamma}\Big)^{n^2+n^2 m},
\end{align*}
with $\epsilon_\gamma := \min\left\{\frac{a_1}{4M\sqrt n},\frac{1}{4} \right\}$. Also, $\log v_\gamma(M)
\le
\log 544 + (5 n^2+ 5 n^2 m)\log\left(\frac{n^2+n^2 m}{\epsilon_\gamma}\right)$.
\end{lemma}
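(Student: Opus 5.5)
We follow the standard SME contraction argument of \cite{Li2024SME-LTI}: discretize the set of candidate error matrices, and show that for each fixed candidate the persistent excitation event forces, in every block, some coordinate in which the noise would have to exceed $\mathbb W$ unless a boundary event of $\mathbf w_t$ fails.

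\textbf{Step 1 (truncation and reduction to a compact set).} We intersect with $\mathcal E_{2,M}=\{\max_{t\le T}\|\mathbf z_t\|_2\le M\}$. By the Markov/union bound already used in the proof of \Cref{lem:E2c}, $\mathbb P(\mathcal E_{2,M}^c)\le C_zT^{r+1}/M^2$. Since $\Gamma_T$ is convex (an intersection of slabs) and contains $0$, the event $\mathcal E_1$ implies that some $\gamma\in\Gamma_T$ has $\|\gamma\|_F=\delta/2$ exactly: scale a witness down toward $0$. It therefore suffices to control candidates on the Frobenius sphere of radius $\delta/2$ in $\mathbb R^{n\times n_z}\cong\mathbb R^{d_\gamma}$. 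We cover this sphere by a net $\{\gamma_i\}$ of relative radius $\epsilon_\gamma$, that is, absolute radius $\epsilon_\gamma\delta/2$. \Cref{lem:sphere-cover} with $d=d_\gamma$ gives the stated bound on $v_\gamma(M)$.

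\textbf{Step 2 (a fixed direction on a PE block).} Fix a block $k$. On $\mathcal E_2$ we have $\frac1\kappa\sum_t\|\gamma\mathbf z_{k\kappa+t}\|_2^2\ge a_1^2\|\gamma\|_F^2$, so some $t$ in the block satisfies $\|\gamma\mathbf z_t\|_2\ge a_1\delta/2$. Hence some coordinate $j$ satisfies $|(\gamma\mathbf z_t)[j]|\ge a_1\delta/(2\sqrt n)=2\varepsilon_\delta$. Feasibility requires $\|\mathbf w_t-\gamma\mathbf z_t\|_\infty\le w_{\max}$. With $b=\operatorname{sign}((\gamma\mathbf z_t)[j])$, this fails whenever $b\mathbf w_t[j]\ge w_{\max}-\bar\varepsilon_\delta$, provided the net approximation error $|((\gamma-\gamma_i)\mathbf z_t)[j]|\le \epsilon_\gamma\delta M/2\le a_1\delta/(8\sqrt n)$ is absorbed; this is exactly why $\epsilon_\gamma\le a_1/(4M\sqrt n)$ is chosen. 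We therefore work with net points: for $\gamma_i$, PE at $\gamma$ transfers to a large coordinate $|(\gamma_i\mathbf z_t)[j]|\ge \tfrac32\varepsilon_\delta$ with margin. The key observation is that the index pair $(t,j,b)$ is chosen from $\mathbf z$'s of the block, which are measurable before $\mathbf w_t$ is drawn. We thus define a stopping-time-style selection: the first $t$ in block $k$ at which $|(\gamma_i\mathbf z_t)[j]|\ge\tfrac32\varepsilon_\delta$ for some $j$. Since $\mathbf w_t$ is independent of $\mathcal F_t\ni\mathbf z_t$, conditionally the probability that $\mathbf w_t$ does not hit the boundary strip is at most $1-q_w(\bar\varepsilon_\delta)$.

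\textbf{Step 3 (chaining across blocks, union bound).} For fixed $\gamma_i$, survival of every block requires that at each selected time the boundary event fails. Conditioning successively on $\mathcal F$ at the block boundaries, the tower rule gives a product bound $(1-q_w(\bar\varepsilon_\delta))^{\lfloor (T-1)/\kappa\rfloor}$. A union bound over the $v_\gamma(M)$ net points, plus the truncation term, yields the lemma. Since the argument is phrased through net points, we do not need $\mathcal E_2$ itself to be decoupled from the $\mathbf w$'s. We only use that, on $\mathcal E_2\cap\mathcal E_{2,M}$, the selection for the net point near the witness $\gamma$ succeeds in each block.

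\textbf{Main obstacle.} The main difficulty is the measurability and coupling in Step 2–3. The event ``$\gamma_i$ survives'' must be shown to imply, on $\mathcal E_2\cap\mathcal E_{2,M}$, that at each adaptively chosen time the independent fresh noise avoided a strip of probability at least $q_w$. We must also check that the $\mathcal E_2$ requirement does not destroy independence. We will handle this by defining, for each $i$, the predictable selection rule above, independent of $\mathcal E_2$, and bounding the probability that $\mathbf w$ avoids the strip at all selected times, regardless of $\mathcal E_2$.
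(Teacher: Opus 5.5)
Your proposal is correct and follows essentially the paper's proof. The shared steps are: truncation on $\mathcal E_{2,M}$; a net over error directions of radius $\epsilon_\gamma\le a_1/(4M\sqrt n)$, which absorbs the approximation error; a predictable first-hitting time in each block defined from the net point $\gamma_i$ (the paper's $L_{i,k}$); independence of the fresh $\mathbf w_t$ from $\mathcal F_t$; and chaining over the $\lfloor (T-1)/\kappa\rfloor$ blocks followed by a union bound. Rescaling a witness onto the sphere $\|\gamma\|_F=\delta/2$ via convexity of $\Gamma_T\ni 0$ is a harmless variant of the paper's normalization $\gamma/\|\gamma\|_F$.

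One slip to fix concerns the side of the strip. Take $b=\operatorname{sign}((\gamma_i\mathbf z_t)[j])$, with $b$ and $j$ taken from $\gamma_i\mathbf z_t$ rather than from the witness $\gamma$, so that they are $\mathcal F_t$-measurable. Feasibility is then violated when $b\mathbf w_t[j]<-w_{\max}+\bar\varepsilon_\delta$, i.e.\ when the noise sits at the \emph{opposite} face, not when $b\mathbf w_t[j]\ge w_{\max}-\bar\varepsilon_\delta$. Since $q_w$ is an infimum over both signs, the bound $1-q_w(\bar\varepsilon_\delta)$ is unaffected.
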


\begin{proof}
We bound $\mathbb{P}(\mathcal{E}_1\cap \mathcal{E}_2)$ by combining a finite-net reduction over error directions with a block-wise elimination argument induced by $\mathcal{E}_2$.
We first truncate the regressors to control approximation errors, then discretize the unit Frobenius sphere to consider finitely many candidate directions, and finally show that under $\mathcal{E}_2$ each block rules out every such direction unless the disturbance avoids a corresponding boundary neighborhood of $\mathbb{W}$.

Using the definition of event $\mathcal{E}_{2,M}$, we have
\begin{align}
\mathbb{P}(\mathcal{E}_1\cap \mathcal{E}_2)
\le
\mathbb{P}(\mathcal{E}_1\cap \mathcal{E}_2\cap \mathcal{E}_{2,M})
+
\mathbb{P}(\mathcal{E}_{2,M}^c).
\label{eq:L6_split_trunc}
\end{align}
The tail term $\mathbb{P}(\mathcal{E}_{2,M}^c)$ is bounded as in \Cref{lem:E2c}, yielding $\mathbb{P}(\mathcal{E}_{2,M}^c)\le C_zT^{r+1}/M^2$.

Next, recall $\mathcal{E}_1=\{\exists\,\gamma\in\Gamma_T:\|\gamma\|_F\ge \delta/2\}$ and define the unit Frobenius sphere $\mathcal{S}:=\{\Gamma\in\mathbb{R}^{n\times n_z}:\ \|\Gamma\|_F=1\},\ n_z:=n+nm$. Let $\mathcal{M}:=\{\Gamma_1,\ldots,\Gamma_{v_\gamma(M)}\}$ be an $\varepsilon_\gamma$-net of $\mathcal{S}$ in Frobenius norm and set $\varepsilon_\gamma:=\min\{\frac{a_1}{4M\sqrt{n}},\frac{1}{4}\}, a_1:=\frac{k_zp_z}{4}$. For each net point $\Gamma_i$, define
\begin{align*}
\mathcal{A}_i
:=
\Big\{
\exists\,\gamma\in\Gamma_T:\ \|\gamma\|_F\ge \delta/2,\ 
\big\|\gamma/\|\gamma\|_F-\Gamma_i\big\|_F\le \varepsilon_\gamma
\Big\}.
\end{align*}
On $\mathcal{E}_{2,M}$, the existence of a large feasible $\gamma$ implies that its normalized direction lies within $\varepsilon_\gamma$ of some net point, hence
$\mathcal{E}_1\cap \mathcal{E}_{2,M}\subseteq \bigcup_{i=1}^{v_\gamma(M)} \mathcal{A}_i$, and
\begin{align}
\mathbb{P}(\mathcal{E}_1\cap \mathcal{E}_2\cap \mathcal{E}_{2,M})
\le
\sum\nolimits_{i=1}^{v_\gamma(M)}
\mathbb{P}(\mathcal{A}_i\cap \mathcal{E}_2\cap \mathcal{E}_{2,M}).
\label{eq:E1_net_union}
\end{align}

It remains to bound $\mathbb{P}(\mathcal{A}_i\cap \mathcal{E}_2\cap \mathcal{E}_{2,M})$ for a fixed net direction $\Gamma_i$. Given $\mathbf z_t$ is $\mathcal F_t$-measurable and $\mathbf w_t$ is independent of $\mathcal F_t$, for each $i$ and each time $t$, choose $j_{i,t}\in[n]$ and $b_{i,t}\in\{\pm1\}$ such that $\|\Gamma_i\mathbf z_t\|_\infty = b_{i,t}(\Gamma_i\mathbf z_t)[j_{i,t}]$. Then $j_{i,t}$ and $b_{i,t}$ are $\mathcal F_t$-measurable. For each block $k$, define the stopping time
\begin{align*}
L_{i,k}:=
\min\left(
\{\ell \in \{1,\ldots,\kappa\} :
\|\Gamma_i\mathbf z_{k\kappa+\ell}\|_\infty \ge \frac{a_1}{\sqrt{n}}\}
\cup \{\kappa + 1\}
\right).
\end{align*}
For every $\ell\le \kappa$, the event $\{L_{i,k}=\ell\}$ is determined by $\mathbf z_{k\kappa+1},\ldots,\mathbf z_{k\kappa+\ell}$, hence $\{L_{i,k}=\ell\}\in\mathcal F_{k\kappa+\ell}$.We recall the block Gram matrix $\mathbf K_k:=\frac{1}{\kappa}\sum_{t=1}^{\kappa}\mathbf z_{k\kappa+t}\mathbf z_{k\kappa+t}^\top$, so that $\mathcal E_2=\{\mathbf K_k\succeq a_1^2 \mathbf{I}_{n_z}\ \text{for all }k\}$.
On $\mathcal E_2$, for every $i,k$, we have $1/\kappa\sum\nolimits_{t=1}^{\kappa}\|\Gamma_i\mathbf z_{k\kappa+t}\|_2^2
=
\operatorname{tr}(\Gamma_i\mathbf K_k\Gamma_i^\top)
\ge a_1^2\|\Gamma_i\|_F^2=a_1^2$. Therefore, some $t\in\{1,\ldots,\kappa\}$ satisfies $\|\Gamma_i\mathbf z_{k\kappa+t}\|_\infty\ge a_1/\sqrt n$, and thus $L_{i,k}\le\kappa$ on $\mathcal E_2$.

Now consider $\mathcal A_i\cap\mathcal E_2\cap\mathcal E_{2,M}$.
By definition of $\mathcal A_i$, there exists $\gamma\in\Gamma_T$ with $\|\gamma\|_F\ge \delta/2$ and
$\|\gamma/\|\gamma\|_F-\Gamma_i\|_F\le \varepsilon_\gamma$.
Let $t_{i,k}:=k\kappa+L_{i,k}$. On $\mathcal E_2$, $L_{i,k}\le\kappa$, and on $\mathcal E_{2,M}$, $\|\mathbf z_{t_{i,k}}\|_2\le M$. Therefore,
\begin{align*}
&b_{i,t_{i,k}}
\left(\frac{\gamma}{\|\gamma\|_F}\mathbf z_{t_{i,k}}\right)[j_{i,t_{i,k}}]
\ge
b_{i,t_{i,k}}(\Gamma_i\mathbf z_{t_{i,k}})[j_{i,t_{i,k}}]
-
\left\|
\left(\frac{\gamma}{\|\gamma\|_F}-\Gamma_i\right)\mathbf z_{t_{i,k}}
\right\|_\infty  \ge
\frac{a_1}{\sqrt n}-\varepsilon_\gamma M
\ge
\frac{3a_1}{4\sqrt n}.
\end{align*}

Multiplying by $\|\gamma\|_F\ge\delta/2$ gives, for every block $k$,
\begin{align*}
b_{i,t_{i,k}}(\gamma\mathbf z_{t_{i,k}})[j_{i,t_{i,k}}]
\ge
a_1\delta / (4\sqrt n)
=:\varepsilon_\delta \ge \bar\varepsilon_\delta.
\end{align*}

By \Cref{ass:tight-bound-w}, $q_w(\varepsilon)\ge c_w\varepsilon$ for
$0<\varepsilon\le \varepsilon_0$, and for every fixed
$j\in[n]$ and $b\in\{\pm1\}$, $\mathbb P\!\left(b \mathbf{w}_t[j]\ge -w_{\max}+\bar\varepsilon_\delta\right)
\le 1-q_w(\bar\varepsilon_\delta)$.

For each $i,k$, define the block non-elimination event $\mathcal G_{i,k}
:=
\Big\{
L_{i,k}\le\kappa,\
b_{i,t_{i,k}}\mathbf w_{t_{i,k}}[j_{i,t_{i,k}}]
\ge
-w_{\max}+\bar\varepsilon_\delta
\Big\}$. We claim that
\begin{align*}
\mathcal A_i\cap \mathcal E_2\cap \mathcal E_{2,M}
\subseteq
\bigcap\nolimits_{k=0}^{\lfloor (T-1)/\kappa\rfloor-1}\mathcal G_{i,k}.
\end{align*}
At $t=t_{i,k}$, with $b=b_{i,t_{i,k}}$ and $j=j_{i,t_{i,k}}$, feasibility gives
$-w_{\max}\le b(\mathbf w_t[j]-(\gamma\mathbf z_t)[j])\le w_{\max}$.
Since $b(\gamma\mathbf z_t)[j] \ge\varepsilon_\delta \ge \bar\varepsilon_\delta$, the left inequality yields
$b\mathbf w_t[j]\ge -w_{\max}+\bar\varepsilon_\delta$, proving the claim.

It remains to bound the probability of the intersection of the events $\mathcal G_{i,k}$. Let
$\mathcal H_{i,k}:=\cap_{s=0}^{k-1}\mathcal G_{i,s}$, with $\mathcal H_{i,0}$ being the whole sample space.
For any $\ell\in\{1,\ldots,\kappa\}$ and $t=k\kappa+\ell$, the event
$\mathcal H_{i,k}\cap\{L_{i,k}=\ell\}$ is $\mathcal F_t$-measurable. Moreover, $b_{i,t}$ and $j_{i,t}$ are $\mathcal F_t$-measurable, while $\mathbf w_t$ is independent of $\mathcal F_t$. Hence,
\begin{align*}
\mathbb P\!\left(
\mathcal H_{i,k}\cap\{L_{i,k}=\ell\}
\cap
\{b_{i,t}\mathbf w_t[j_{i,t}]
\ge -w_{\max}+\bar\varepsilon_\delta\}
\right) \le
(1-q_w(\bar\varepsilon_\delta))
\mathbb P\!\left(\mathcal H_{i,k}\cap\{L_{i,k}=\ell\}\right).
\end{align*}

Summing over $\ell=1,\ldots,\kappa$ gives $\mathbb P(\mathcal H_{i,k}\cap\mathcal G_{i,k}) \le (1-q_w(\bar\varepsilon_\delta))\mathbb P(\mathcal H_{i,k})$. Iterating over the blocks yields
\begin{align*}
\mathbb P\!\left(
\bigcap\nolimits_{k=0}^{\lfloor (T-1)/\kappa\rfloor-1}\mathcal G_{i,k}
\right)
\le
\big(1-q_w(\bar\varepsilon_\delta)\big)^{\lfloor (T-1)/\kappa\rfloor}.
\end{align*}
Combining this with the net union bound \eqref{eq:E1_net_union}, we obtain
\begin{align*}
\mathbb P(\mathcal E_1\cap\mathcal E_2\cap\mathcal E_{2,M})
\le
v_\gamma(M)\big(1-q_w(\bar\varepsilon_\delta)\big)^{\lfloor (T-1)/\kappa\rfloor}.
\end{align*}
Together with \eqref{eq:L6_split_trunc} and $\mathbb P(\mathcal E_{2,M}^c)\le C_zT^{r+1}/M^2$, we obtain \Cref{lem:E1E2}.
\end{proof}

\bibliographystyle{ieeetr}
\bibliography{ref.bib}

\end{document}